\newcommand{\EE}[1]{\mathbb{E} \left[ #1 \right]}
\newcommand{\EEsub}[2]{\mathop{\mathbb{E}}_{#2} \left[ #1 \right]}
\newcommand{\argmin}{\arg\,\min}
\DeclareMathOperator*{\vecc}{vec}
\newtheorem{thm}{Theorem}
\newtheorem*{thm*}{Theorem}
\newtheorem{lem}[thm]{Lemma}
\newtheorem*{lem*}{Lemma}
\newtheorem{deff}{Definition}
\icmltitlerunning{Learning to Reweight Examples for Robust Deep Learning}
\begin{document}

\twocolumn[
\icmltitle{Learning to Reweight Examples for Robust Deep Learning}
\icmlsetsymbol{equal}{*}

\begin{icmlauthorlist}
\icmlauthor{Mengye Ren}{atg,to}
\icmlauthor{Wenyuan Zeng}{atg,to}
\icmlauthor{Bin Yang}{atg,to}
\icmlauthor{Raquel Urtasun}{atg,to}
\end{icmlauthorlist}

\icmlaffiliation{atg}{Uber Advanced Technologies Group, Toronto ON, CANADA}
\icmlaffiliation{to}{Department of Computer Science, University of Toronto, Toronto ON, CANADA}

\icmlcorrespondingauthor{Mengye Ren}{mren3@uber.com}
% \icmlcorrespondingauthor{Wenyuan Zeng}{wenyuan@uber.com}
% \icmlcorrespondingauthor{Bin Yang}{byang10@uber.com}
% \icmlcorrespondingauthor{Raquel Urtasun}{urtasun@uber.com}

\icmlkeywords{Example reweighting, meta-Learning, deep learning, machine learning}
\vskip 0.3in
]
\def\arxiv{1}

\printAffiliationsAndNotice{}  % leave blank if no need to mention equal contribution
% \printAffiliationsAndNotice{\icmlEqualContribution} % otherwise use the standard text.
% !TEX root = ../main.tex

\begin{abstract}

Deep neural networks have been shown to be very powerful modeling tools for many supervised learning
tasks involving complex input patterns.  However, they can also easily overfit to training set
biases and label noises.  In addition to various regularizers, example reweighting algorithms are
popular solutions to these problems, but they  require careful tuning of additional hyperparameters,
such as example mining schedules and regularization hyperparameters. In contrast to past reweighting
methods, which typically consist of functions of the cost value of each example, in this work we
propose a novel meta-learning algorithm that learns to assign weights to training examples based on
their gradient directions. To determine the example weights, our method performs a meta gradient
descent step on the current mini-batch example weights (which are initialized from zero) to minimize
the loss on a clean unbiased validation set. Our proposed method can be easily implemented on any
type of deep network, does not require any additional hyperparameter tuning, and achieves impressive
performance on class imbalance and corrupted label problems where only a small amount of clean
validation data is available.

\end{abstract}

% !TEX root = ../main.tex

\section{Introduction}

Deep neural networks (DNNs) have been widely used for machine learning applications due to their
powerful capacity for modeling complex input patterns. Despite their  success, it has been shown
that DNNs are prone to training set biases, i.e. the training set  is drawn from a joint
distribution $p(x, y)$ that is different from the distribution $p(x^v, y^v)$ of the evaluation set.
This distribution mismatch could have many different forms.  Class imbalance in the training set is
a very common example. In applications such as object detection in the context of autonomous
driving, the vast majority of the training data is composed of standard  vehicles but models also
need to recognize rarely seen classes such as emergency vehicles or animals with very high accuracy.
This will sometime lead to biased training models that do not perform well in practice.

Another popular type of training set bias is label noise. To train a reasonable supervised deep
model, we ideally need a large dataset with high-quality labels, which require many passes of
expensive human quality assurance (QA). Although coarse labels are cheap and of high availability,
the presence of noise will hurt the model performance, e.g. \citet{rethink} has shown that a standard
CNN can fit any ratio of label flipping noise in the training set and eventually leads to poor
generalization performance.

Training set biases and misspecification can sometimes be addressed with dataset resampling
\cite{smote}, i.e. choosing the correct proportion of labels to train a network on, or more
generally by assigning a weight to each example and minimizing a weighted training loss. The example
weights are typically calculated based on the training loss, as in many classical algorithms such as
AdaBoost \cite{adaboost}, hard negative mining \cite{hardneg}, self-paced learning
\cite{kumar10selfpaced}, and other more recent work \cite{chang17activebias,jiang17mentornet}.

However, there exist two contradicting ideas in training loss based approaches. In noisy label
problems, we prefer examples with smaller training losses as they are more likely to be clean
images; yet in class imbalance problems, algorithms such as hard negative mining \cite{hardneg}
prioritize examples with higher training loss since they are more likely to be the minority class.
In cases when the training set is both imbalanced and noisy, these existing methods would have the
wrong model assumptions. In fact, without a proper definition of an unbiased test set, solving the
training set bias problem is inherently ill-defined. As the model cannot distinguish the right from
the wrong, stronger regularization can usually work surprisingly well in certain synthetic noise
settings. Here we argue that in order to learn general forms of training set biases, it is necessary
to have a small unbiased validation to guide training. It is actually not uncommon to construct a
dataset with two parts - one relatively small but very accurately  labeled, and another massive but
coarsely labeled. Coarse labels can come from inexpensive crowdsourcing services   or weakly
supervised data \cite{cityscapes,ILSVRC15,webly}.

Different from existing training loss based approaches, we follow a meta-learning paradigm and model
the most basic assumption instead: \textit{the best example weighting should minimize the loss of a
set of unbiased clean validation examples that are consistent with the evaluation procedure}.
Traditionally, validation is performed at the end of training, which can be prohibitively expensive
if we treat the example weights as some hyperparameters to optimize; to circumvent this, we perform
validation at \textit{every} training iteration to dynamically determine the example weights of the
current batch. Towards this goal, we propose  an online reweighting method that leverages an
additional small validation set and adaptively assigns importance weights to examples in every
iteration. We experiment with both class imbalance and corrupted label problems and find that our
approach significantly increases the robustness to training set biases.

% !TEX root = ../main.tex

\section{Related Work}
The idea of weighting each training example has been well studied in the literature. Importance
sampling \cite{importantsample}, a classical method in statistics, assigns weights to samples in
order to match one distribution to another. Boosting algorithms such as AdaBoost \cite{adaboost},
select harder examples to train subsequent classifiers. Similarly, hard example mining
\cite{hardneg}, downsamples the majority class and exploits the most difficult examples. Focal loss
\cite{focal} adds a soft weighting scheme that emphasizes harder examples.

Hard examples are not always preferred in the presence of outliers and noise processes. Robust loss
estimators typically downweigh examples with high loss. In self-paced learning
\cite{kumar10selfpaced}, example weights are obtained through optimizing the weighted training loss
encouraging learning easier examples first. In each step, the learning algorithm jointly solves a
mixed integer program that iterates optimizing over model parameters and binary example weights.
Various regularization terms  on the example weights have since been proposed to prevent overfitting
and trivial solutions of assigning weights to be all zeros \cite{kumar10selfpaced,spaco,spcl}.
\citet{wang17reweight} proposed a Bayesian method that infers the example weights as latent
variables. More recently, \citet{jiang17mentornet} proposed to use a meta-learning LSTM to output
the weights of the examples based on the training loss. Reweighting examples is also related to
curriculum learning \cite{bengio09curriculum}, where the model reweights among many available tasks.
Similar to self-paced learning, typically it is beneficial to start with easier examples.

One crucial advantage of reweighting examples is robustness against training set bias. There has
also been a multitude of prior studies on class imbalance problems, including using dataset
resampling \cite{smote,dong17imbalance}, cost-sensitive weighting
\cite{costsensitive,costsensitivedeep}, and structured margin based objectives \cite{lmle}.
Meanwhile, the noisy label problem has been thoroughly studied by the learning theory community
\cite{natarajan13noisy,noisytheory} and practical methods have also been proposed
\cite{reed14noisy,sukhbaatar14convnoise,xiao15noisy,azadi16air,goldberger17noise,
li17noisydistill,jiang17mentornet,vahdat17crf,glc}.  In addition to corrupted data,
\citet{kohL17influence,datapoison} demonstrate the possibility of a dataset adversarial attack (i.e.
dataset poisoning).

Our method improves the training objective through a weighted loss rather than an average loss and
is an instantiation of meta-learning \cite{metalearn,lakemetalearn,l2l}, i.e. learning to learn
better. Using validation loss as the meta-objective has been explored in recent meta-learning
literature for few-shot learning \cite{ravi2017oneshot,metafewshot,hpernet}, where only a handful of
examples are available for each class. Our algorithm also resembles MAML \cite{maml} by taking one
gradient descent step on the meta-objective for each iteration. However, different from these
meta-learning approaches, our reweighting method does not have any additional hyper-parameters and
circumvents an expensive offline training stage. Hence, our method can work in an online fashion
during regular training.

% !TEX root = ../main.tex
\section{Learning to Reweight Examples}

In this section, we derive our model from a meta-learning objective towards an online approximation
that can fit into any regular supervised training. We give a practical implementation suitable for
any deep network type and  provide theoretical guarantees under mild conditions that our algorithm
has a convergence rate of $O(1/\epsilon^2)$. Note that  this is the same as that of stochastic
gradient descent (SGD).

\subsection{From a meta-learning objective to an online approximation}

Let $(x,y)$ be an input-target pair, and $\{(x_i, y_i), 1 \le i \le N\}$ be the training set. We
assume that there is a small unbiased and clean validation set $\{(x^v_i, y^v_i), 1 \le i \le M\}$,
and $M \ll N$. Hereafter, we will use superscript $v$ to denote validation set and subscript $i$ to
denote the $i^{th}$ data. We also assume that the training set contains the validation set;
otherwise, we can always add this small validation set into the training set and leverage more
information during training.

Let $\Phi (x,\theta)$ be our neural network model, and $\theta$ be the model parameters. We consider
a loss function $C(\hat{y}, y)$ to minimize during training, where $\hat{y} = \Phi (x, \theta)$.

In standard training, we aim to minimize the expected loss for the training set: $\frac{1}{N}
\sum_{i=1}^N C(\hat{y}_i, y_i)=\frac{1}{N} \sum_{i=1}^N f_i(\theta)$, where each input example is
weighted equally, and $f_i(\theta)$ stands for the loss function associating with data $x_i$. Here
we aim to learn a reweighting of the inputs, where we minimize a weighted loss:
\begin{equation}
\label{eq:theta_star}
\theta^*(w) = \argmin_\theta \sum_{i=1}^N w_i f_i(\theta),
\end{equation}
with $w_i$  unknown upon beginning. Note that $\{w_i\}_{i=1}^N$ can be understood as  training
hyperparameters, and the optimal selection of $w$ is based on its validation performance:
\begin{equation}\label{eq:w_star}
w^* = \argmin_{w, w \ge 0} \frac{1}{M} \sum_{i=1}^M f_i^v(\theta^*(w)).
\end{equation}
It is necessary that $w_i \ge 0$ for all $i$, since  minimizing the negative training loss can
usually result in unstable behavior.

\paragraph{Online approximation} Calculating the optimal $w_i$ requires two nested loops of
optimization, and every single loop can be very expensive. The motivation of our approach is to adapt
online $w$ through a single optimization loop. For each training iteration, we inspect the descent
direction of some training examples locally on the training loss surface and reweight them
according to their similarity to the descent direction of the validation loss surface.

For most training of deep neural networks, SGD or its variants are used to  optimize such loss
functions. At every step $t$ of training, a mini-batch of training examples $\{(x_i, y_i), 1 \le i
\le n\}$ is sampled, where $n$ is the mini-batch size, $n \ll N$. Then the parameters are adjusted
according  to the descent direction of the expected loss on the mini-batch. Let's consider vanilla
SGD:
\begin{align}
\theta_{t+1} &= \theta_t - \alpha \nabla \left( \frac{1}{n}\sum_{i=1}^n f_i(\theta_t) \right),
\end{align}
where $\alpha$ is the step size.

We want to understand what would be the impact of training example $i$ towards  the performance of
the validation set at training step $t$. Following a similar analysis to \citet{kohL17influence}, we
consider perturbing the weighting by $\epsilon_i$ for each training example in the mini- batch,
\begin{align}
f_{i,\epsilon}(\theta) &= \epsilon_i f_i(\theta),\\
\hat{\theta}_{t+1}(\epsilon) &= \theta_t - \alpha \nabla 
    \sum_{i=1}^n f_{i,\epsilon}(\theta)\Bigr|_{\theta=\theta_t}.
\end{align}
We can then look for the optimal $\epsilon^*$ that minimizes the validation loss $f^v$ locally at step $t$:
\begin{align}
\epsilon^*_t = \argmin_\epsilon \frac{1}{M} \sum_{i=1}^M f^v_i(\theta_{t+1}(\epsilon)).
\end{align}
Unfortunately,  this can still be quite time-consuming. To get a cheap estimate of $w_i$ at step
$t$, we take a single gradient descent step on a mini-batch of validation samples wrt.
$\epsilon_t$, and then rectify the output to get a non-negative weighting:
\begin{align}
\label{eq:meta-gradient}
u_{i,t} &= -\eta \frac{\partial}{\partial \epsilon_{i,t}} \frac{1}{m} 
            \sum_{j=1}^m f_j^v(\theta_{t+1}(\epsilon)) \Bigr|_{\epsilon_{i,t}=0},\\
\tilde{w}_{i,t} &= \max(u_{i,t}, 0).
\end{align}
where $\eta$ is the descent step size on $\epsilon$.

To match the original training step size, in practice, we can consider normalizing the weights of
all examples in a training batch so that they sum up to one. In other words, we choose to have
a hard constraint within the set $\{w: \lVert w \rVert_1 = 1 \} \cup \{0\}$.
\begin{align}
w_{i,t} = \frac{\tilde{w}_{i, t}}{(\sum_j \tilde{w}_{j, t}) + \delta(\sum_j \tilde{w}_{j, t})},
\end{align}

where $\delta(\cdot)$ is to prevent the degenerate case when all $w_i$'s in a mini-batch are zeros,
i.e. $\delta(a) = 1$ if $a = 0$, and equals to $0$ otherwise. Without the batch-normalization step,
it is possible that the algorithm modifies its effective learning rate of the training progress, and
our one-step look ahead may be too conservative in terms of the choice of learning rate
\cite{shorthorizon}. Moreover, with batch normalization, we effectively cancel the meta learning
rate parameter $\eta$.

\subsection{Example: learning to reweight examples in a multi-layer perceptron network}
In this section, we study how to compute $w_{i,t}$ in a multi-layer perceptron (MLP) network. One of
the core steps is to compute the gradients of the validation loss wrt. the local perturbation
$\epsilon$, We can consider a multi-layered network where we have parameters for each layer $\theta
= \{\theta_l\}_{l=1}^L$, and at every layer, we first compute $z_l$ the pre-activation, a weighted
sum of inputs to the layer, and afterwards we apply a non-linear activation function $\sigma$ to
obtain $\tilde{z}_l$ the post-activation:
\begin{align}
z_l &= \theta_l^\top \tilde{z}_{l-1},\\
\tilde{z}_l &= \sigma(z_l).
\end{align}
During backpropagation, let $g_l$ be the gradients of loss wrt. $z_l$, and the gradients wrt.
$\theta_l$ is given by $\tilde{z}_{l-1} g_l^\top$.
We can further express the gradients towards $\epsilon$ as a sum of local dot products.
\begin{align}
\begin{split}
&\frac{\partial}{\partial \epsilon_{i,t}} \mathbb{E}\left[ f^v(\theta_{t+1}(\epsilon))
\Bigr|_{\epsilon_{i,t}=0} \right]\\
\propto&-\frac{1}{m}\sum_{j=1}^m 
\frac{\partial f_j^v(\theta)}{\partial \theta}\Bigr|_{\theta=\theta_t}^\top
\frac{\partial f_i(\theta)}{\partial \theta}\Bigr|_{\theta=\theta_t}\\
=&-\frac{1}{m}\sum_{j=1}^m \sum_{l=1}^L
(\tilde{z}^v_{j,l-1}{}^\top
\tilde{z}_{i,l-1})
(g^v_{j,l}{}^\top g_{i,l}).
\label{eq:sim}
\end{split}
\end{align}
\if\arxiv1
Detailed derivations can be found in Appendix~\ref{sec:mlp_derive}. 
\else
Detailed derivations can be found in Supplementary Materials.
\fi
Eq.~\ref{eq:sim} suggests that
the meta-gradient on $\epsilon$ is composed of the sum of the products of two terms: $z^\top z^v$
and $g^\top g^v$. The first dot product computes the similarity between the training and validation
inputs to the layer, while the second computes the similarity between the training and validation
gradient directions. In other words, suppose that a pair of training and validation examples are
very similar, and they also provide similar gradient directions, then this training example is
helpful and should be up-weighted, and conversely, if they provide opposite gradient directions,
this training example is harmful and should be downweighed.

\subsection{Implementation using automatic differentiation}
% !TEX root = ../main.tex
\begin{figure}[t]
\centering
\includegraphics[width=\columnwidth,trim={0cm 8.9cm 15cm 0},clip]{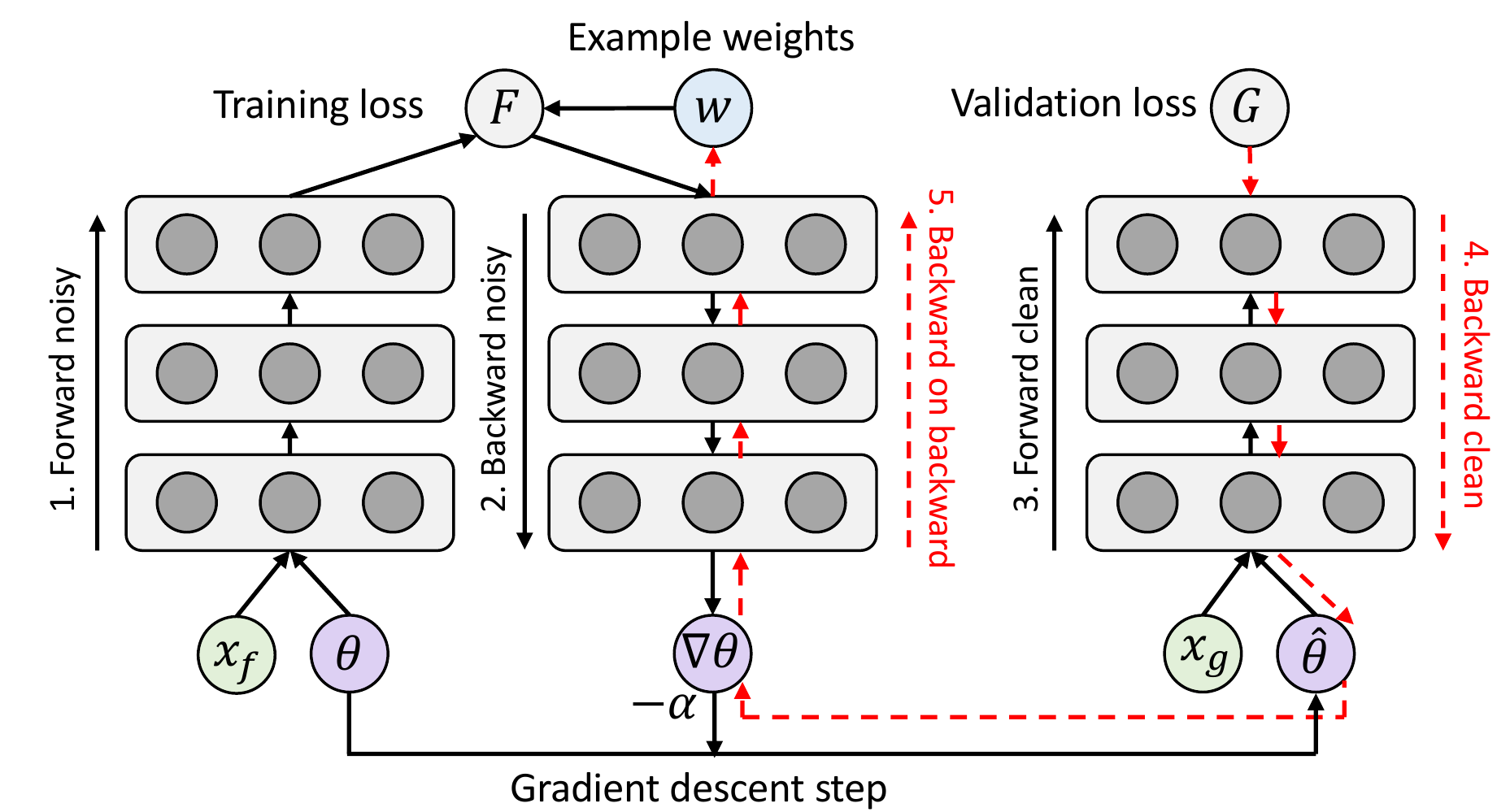}
\vspace{-0.1in}
\caption{Computation graph of our algorithm in a deep neural network, which can be efficiently implemented using second order automatic differentiation.}
\label{fig:comp_graph}
\end{figure} In an MLP and a CNN, the unnormalized weights can be calculated based on
the sum of the correlations of layerwise activation gradients and input activations. In more general
networks, we can leverage automatic differentiation techniques to compute the gradient of the
validation loss wrt. the example weights of the current batch. As shown in
Figure~\ref{fig:comp_graph}, to get the gradients of the example weights, one needs to first unroll
the gradient graph of the training batch, and then use backward-on-backward automatic
differentiation to take a second order gradient pass (see Step 5 in Figure~\ref{fig:comp_graph}). We
list detailed step-by-step pseudo-code in Algorithm~\ref{alg:ad}. This implementation can be
generalized to any deep learning architectures and can be very easily implemented using popular deep
learning frameworks such as TensorFlow \cite{tensorflow}.

\begin{minipage}{\columnwidth}
\begin{algorithm}[H]
\caption{Learning to Reweight Examples using Automatic Differentiation}
\label{alg:ad}
\begin{algorithmic}[1]
\REQUIRE $\theta_0$, $\mathcal{D}_f$, $\mathcal{D}_g$, $n$, $m$
\ENSURE $\theta_T$
\FOR{$t=0$ ... $T-1$}
\STATE $\{X_f, y_f\} \gets$ \text{SampleMiniBatch}($\mathcal{D}_f$, $n$)
\STATE $\{X_g, y_g\} \gets$ \text{SampleMiniBatch}($\mathcal{D}_g$, $m$)
\STATE $\hat{y}_f \gets \text{Forward}(X_{f}, y_{f}, \theta_t)$
\STATE $\epsilon \gets 0$; $l_f \gets \sum_{i=1}^n \epsilon_i C(y_{f,i}, \hat{y}_{f,i})$
\STATE $\nabla \theta_t \gets \text{BackwardAD}(l_f, \theta_t)$
\STATE $\hat{\theta}_t \gets \theta_t - \alpha \nabla \theta_t$
\STATE $\hat{y}_g \gets \text{Forward}(X_{g}, y_{g}, \hat{\theta}_t)$
\STATE $l_g \gets \frac{1}{m} \sum_{i=1}^m C(y_{g,i}, \hat{y}_{g,i})$
\STATE $\nabla \epsilon \gets \text{BackwardAD}(l_g, \epsilon)$  \label{lst:line:bb}
\STATE $\tilde{w} \gets \max(-\nabla \epsilon, 0)$; $w \gets \frac{\tilde{w}}{\sum_j \tilde{w} + \delta(\sum_j \tilde{w})}$
\STATE $\hat{l}_f \gets \sum_{i=1}^n w_i C(y_i, \hat{y}_{f,i})$
\STATE $\nabla \theta_t \gets \text{BackwardAD}(\hat{l}_f, \theta_t)$
\STATE $\theta_{t+1} \gets \text{OptimizerStep}(\theta_t, \nabla \theta_t)$
\ENDFOR
\end{algorithmic}
\end{algorithm}
\end{minipage}

\paragraph{Training time} Our automatic reweighting method will introduce a constant factor of
overhead. First, it requires two full forward and backward passes of the network on training and
validation respectively, and then another backward on backward pass (Step 5 in
Figure~\ref{fig:comp_graph}), to get the gradients to the example weights, and finally a backward
pass to minimize the reweighted objective. In modern networks, a backward-on-backward pass usually
takes about the same time as a forward pass, and therefore compared to regular training, our method
needs approximately 3$\times$ training time; it is also possible to reduce the batch size of the
validation pass for speedup. We expect that it is worthwhile to spend the extra time to avoid the
irritation of choosing early stopping, finetuning schedules, and other hyperparameters.

\subsection{Analysis: convergence of the reweighted training}
Convergence results of SGD based optimization methods are well-known \cite{svrg}. However it is
still meaningful to establish a convergence result about our method since it involves optimization
of two-level objectives (Eq. \ref{eq:theta_star}, \ref{eq:w_star}) rather than one, and we further
make some first-order approximation by introducing Eq. \ref{eq:meta-gradient}. Here, we show
theoretically that our method converges to the critical point of the validation loss function under
some mild conditions, and we also give its convergence rate. More detailed proofs can be found in
the 
\if\arxiv1
Appendix~\ref{sec:lemproof},~\ref{sec:thmproof}.
\else
Supplementary Materials.
\fi

\begin{deff}\label{deff:lipandbound}
A function $f(x): \mathbb{R}^d \to \mathbb{R}$ is said to be Lipschitz-smooth with constant $L$ if
\begin{align*}
\lVert \nabla f(x) - \nabla f(y) \rVert \leq L \lVert x - y \rVert, \forall x, y \in \mathbb{R}^d.
\end{align*}
\end{deff}
\begin{deff}
$f(x)$ has $\sigma$-bounded gradients if $\lVert \nabla f(x) \rVert \leq \sigma$ for all $x \in
\mathbb{R}^d$.
\end{deff}

In most real-world cases, the high-quality validation set is really small, and thus we could set the
mini-batch size $m$ to be the same as the size of the validation set $M$. Under this condition, the
following lemma shows that our algorithm always converges to a critical point of the validation
loss. However, our method is not equivalent to training a model only on this small validation set.
Because directly training a model on a small validation set will lead to severe overfitting issues.
On the contrary, our method can leverage useful information from a larger training set, and still
converge to an appropriate distribution favored by this clean and balanced validation dataset. This
helps both generalization and robustness to biases in the training set, which will be shown in our
experiments.

\begin{lem}\label{lem:convergence}
% !TEX root = ../main.tex
Suppose the validation loss function is Lipschitz-smooth with constant $L$, and the train loss
function $f_i$ of training data $x_i$ have $\sigma$-bounded gradients. Let the learning rate
$\alpha_t$ satisfies $\alpha_t \leq \frac{2n}{L\sigma^2}$, where $n$ is the training batch size.
Then, following our algorithm, the validation loss always monotonically decreases for any sequence of
training batches, namely,
\begin{align}
\label{eq:converge1}
G(\theta_{t+1}) \leq G(\theta_{t}),
\end{align}
where $G(\theta)$ is the total validation loss
\begin{align}
G(\theta) = \frac{1}{M} \sum_{i=1}^M f^v_i(\theta_{t+1}(\epsilon)).
\end{align}
Furthermore, in expectation, the equality in Eq. \ref{eq:converge1} holds only when the gradient of
validation loss becomes 0 at some time step $t$, namely $\EEsub{G(\theta_{t+1})}{t} = G(\theta_t)$
if and only if $\nabla G(\theta_t) = 0$, where the expectation is taking over possible training
batches at time step $t$.

\end{lem}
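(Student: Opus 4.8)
The plan is to reduce the lemma to a single application of the descent lemma for $L$-smooth functions, once we have an explicit formula for the direction in which the algorithm actually moves $\theta$.

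First I would make the one-step unrolled update explicit. Since $\epsilon$ is initialized at $0$ we have $\theta_{t+1}(0)=\theta_t$, so by the chain rule (this is exactly the computation behind Eq.~\ref{eq:sim}) $\frac{\partial}{\partial\epsilon_{i,t}}\frac1m\sum_{j}f_j^v(\theta_{t+1}(\epsilon))\big|_{\epsilon=0} = -\alpha_t\,\nabla G(\theta_t)^\top\nabla f_i(\theta_t)$, where $G$ is the validation loss. Writing $a_i := \nabla G(\theta_t)^\top\nabla f_i(\theta_t)$, the meta-weight before rectification is $u_{i,t}\propto a_i$, hence after the $\max(\cdot,0)$ (and, up to the normalization, which only rescales the whole direction) the effective weight of example $i$ is proportional to $[a_i]_+:=\max(a_i,0)$. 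Substituting this back into the reweighted SGD step gives $\theta_{t+1} = \theta_t - \tfrac{\alpha_t}{n}\sum_{i=1}^n [a_i]_+\,\nabla f_i(\theta_t) =: \theta_t - \tfrac{\alpha_t}{n}v_t$: the update is a scaled nonnegative combination of the mini-batch gradients whose coefficients are precisely the alignments with $\nabla G(\theta_t)$.

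Second I would invoke the quadratic upper bound from $L$-smoothness of $G$, namely $G(\theta_{t+1}) \le G(\theta_t) - \tfrac{\alpha_t}{n}\nabla G(\theta_t)^\top v_t + \tfrac{L\alpha_t^2}{2n^2}\lVert v_t\rVert^2$, and then control the two terms. The first is the crucial sign fact, and it is the reason the rectification is there: $\nabla G(\theta_t)^\top v_t = \sum_i [a_i]_+\,a_i = \sum_i [a_i]_+^2 \ge 0$. The second I would bound with the $\sigma$-bound on the $\nabla f_i$ together with Cauchy--Schwarz over the $n$ examples of the batch: $\lVert v_t\rVert^2 \le \sigma^2\big(\sum_i [a_i]_+\big)^2 \le n\sigma^2\sum_i[a_i]_+^2 = n\sigma^2\,\nabla G(\theta_t)^\top v_t$. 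Combining the two gives $G(\theta_{t+1}) \le G(\theta_t) - \tfrac{\alpha_t}{n}\big(1-\tfrac{L\alpha_t\sigma^2}{2n}\big)\nabla G(\theta_t)^\top v_t$, and the bracket is nonnegative exactly under the hypothesis $\alpha_t\le\tfrac{2n}{L\sigma^2}$; since $\nabla G(\theta_t)^\top v_t\ge0$ this is Eq.~\ref{eq:converge1}. For the equality characterization: if $\nabla G(\theta_t)=0$ then every $a_i=0$, so $v_t=0$, $\theta_{t+1}=\theta_t$ and $G(\theta_{t+1})=G(\theta_t)$ deterministically. Conversely, if $\nabla G(\theta_t)\ne0$, take expectation over the random mini-batch at step $t$: since $\sum_{j=1}^M\nabla G(\theta_t)^\top\nabla f_j^v(\theta_t)=M\lVert\nabla G(\theta_t)\rVert^2>0$, some validation example has strictly positive alignment, and (the validation set being contained in the training set) the corresponding training example is sampled with probability at least $n/N$; together with the termwise nonnegativity of $\nabla G(\theta_t)^\top v_t$ this gives $\EEsub{\nabla G(\theta_t)^\top v_t}{t}>0$, hence $\EEsub{G(\theta_{t+1})}{t}<G(\theta_t)$.

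I expect the main obstacle to be the bookkeeping in the second step: aligning the normalization and step-size conventions of Algorithm~\ref{alg:ad} (batch-normalized weights vs.\ raw $[a_i]_+$, a factor $1/n$ vs.\ not) so that the constants collapse to exactly $\alpha_t\le 2n/(L\sigma^2)$ rather than a batch-dependent threshold; the batch-normalization only rescales $v_t$ and is absorbed against the $1/n$, so it does not affect the qualitative conclusion, but one has to be explicit about it. A secondary subtlety is the boundary case $\alpha_t=2n/(L\sigma^2)$ in the strict-decrease direction, where the chained inequality $\lVert v_t\rVert^2\le n\sigma^2\nabla G(\theta_t)^\top v_t$ is tight only in a degenerate configuration; one either assumes the step size is strictly below the threshold or keeps the slack in $\lVert v_t\rVert^2\le\sigma^2(\sum_i[a_i]_+)^2$.
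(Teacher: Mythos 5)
Your proposal follows essentially the same route as the paper's own proof: you make the one-step unrolled update explicit as $\theta_{t+1}=\theta_t-\tfrac{\alpha_t}{n}\sum_{i}\max\{\nabla G^\top\nabla f_i,0\}\,\nabla f_i$ (the paper's Eq.~\ref{eq:updaterule}, which likewise sets aside the batch normalization of the weights), apply the $L$-smoothness quadratic bound, observe that the first-order term equals $-\tfrac{\alpha_t}{n}\sum_i\max\{\nabla G^\top\nabla f_i,0\}^2\le 0$, control the quadratic term via the $\sigma$-bound on the $\nabla f_i$, and handle the equality characterization by exhibiting a validation example with $\nabla G^\top\nabla f_j>0$ (using $\lVert\nabla G\rVert^2=\tfrac1M\sum_i\nabla G^\top\nabla f_i$) that is sampled with positive probability. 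That last argument is identical to the paper's.

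The one place you diverge is the bound on $\lVert v_t\rVert^2$, and it is worth being precise because this is exactly where the stated step-size threshold comes from. Your chain $\lVert v_t\rVert^2\le\sigma^2\bigl(\sum_i\max\{a_i,0\}\bigr)^2\le n\sigma^2\sum_i\max\{a_i,0\}^2$ (with $a_i=\nabla G^\top\nabla f_i$) is correct, but substituting it into the descent lemma gives the bracket $\bigl(1-\tfrac{L\alpha_t\sigma^2}{2}\bigr)$, not $\bigl(1-\tfrac{L\alpha_t\sigma^2}{2n}\bigr)$ as you wrote: the factor $n$ from Cauchy--Schwarz cancels one of the two $n$'s in $\alpha_t^2/n^2$. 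So your (sound) argument establishes monotone descent only under $\alpha_t\le 2/(L\sigma^2)$, a factor of $n$ short of the lemma's hypothesis $\alpha_t\le 2n/(L\sigma^2)$. The paper reaches the larger threshold only by asserting $\bigl\lVert\sum_{i\in B}c_i\nabla f_i\bigr\rVert^2\le\sum_{i\in B}\lVert c_i\nabla f_i\rVert^2$ in Eq.~\ref{eq:firstineq} (attributed there to the ``triangle inequality''), which is false in general --- the valid inequality carries precisely the extra factor $n$ that you have. The bookkeeping obstacle you anticipated is therefore real and not removable by aligning conventions: with a correct estimate the constants do not collapse to $2n/(L\sigma^2)$, and your proof should simply be stated under the stronger condition $\alpha_t\le 2/(L\sigma^2)$. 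Apart from this constant (and the algebra slip of writing $2n$ where your own chain yields $2$), your argument is complete and matches the paper's.
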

Moreover, we can prove the convergence rate of our method to be $O(1/\epsilon^2)$.

\begin{thm}\label{thm:convergencerate}
% !TEX root = ../main.tex
Suppose $G$, $f_i$ and $\alpha_t$ satisfy the aforementioned conditions, then
Algorithm~\ref{alg:ad} achieves $\EE{\lVert \nabla G(\theta_t) \rVert^2} \leq \epsilon$ in
$O(1/\epsilon^2)$ steps. More specifically,
\begin{align}
\min\limits_{0 < t < T} \EE{\lVert \nabla G(\theta_t) \rVert^2} \leq \frac{C}{\sqrt{T}},
\end{align}
where $C$ is some constant independent of the convergence process.

\end{thm}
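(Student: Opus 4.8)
The plan is to promote the qualitative one-step decrease of Lemma~\ref{lem:convergence} into a \emph{quantitative} descent inequality and then telescope, following the standard template for nonconvex SGD; the only real twist is that the effective ``gradient estimator'' here is the reweighted minibatch gradient $\sum_{i=1}^n w_{i,t}\nabla f_i(\theta_t)$ whose weights $w_{i,t}$ are themselves the rectified meta-gradients of Eq.~\ref{eq:meta-gradient}.

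First I would make the update of Algorithm~\ref{alg:ad} explicit, $\theta_{t+1}=\theta_t-\alpha_t\sum_{i=1}^n w_{i,t}\nabla f_i(\theta_t)$, and feed it into the descent lemma implied by Definition~\ref{deff:lipandbound} for the $L$-Lipschitz-smooth function $G$,
\[
G(\theta_{t+1})\le G(\theta_t)+\big\langle\nabla G(\theta_t),\,\theta_{t+1}-\theta_t\big\rangle+\tfrac{L}{2}\lVert\theta_{t+1}-\theta_t\rVert^2 .
\]
The quadratic term is controlled via $\sum_i w_{i,t}=1$ and $\lVert\nabla f_i\rVert\le\sigma$, yielding $\tfrac{L}{2}\alpha_t^2\sigma^2$, and this is exactly where the hypothesis $\alpha_t\le 2n/(L\sigma^2)$ is spent. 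For the linear term I would substitute the closed form of the weights coming from Eq.~\ref{eq:sim}, namely $w_{i,t}\propto\max\!\big(a_i,0\big)$ with $a_i:=\langle\nabla G(\theta_t),\nabla f_i(\theta_t)\rangle$ and $m=M$ so that $\tfrac1m\sum_{j}\nabla f_j^v(\theta_t)=\nabla G(\theta_t)$ exactly; then $\langle\nabla G(\theta_t),\theta_{t+1}-\theta_t\rangle=-\alpha_t\,\big(\sum_i\max(a_i,0)^2\big)\big/\big(\sum_j\max(a_j,0)\big)\le 0$, which reproduces the monotone decrease of Lemma~\ref{lem:convergence} and isolates the quantity I must bound from below.

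The heart of the argument is to show that, in expectation over the random training minibatch, this rectified ratio is at least a fixed multiple of $\lVert\nabla G(\theta_t)\rVert^2$. Here I would exploit that the validation set lies inside the training set and $m=M$: summing $a_j\le\max(a_j,0)$ over the validation indices and using $\sum_{j}\nabla f_j^v(\theta_t)=M\nabla G(\theta_t)$ shows $\sum_j\max(a_j,0)$ is, up to the minibatch sampling factor, bounded below by a multiple of $\lVert\nabla G(\theta_t)\rVert^2$, while $\sum_j\max(a_j,0)\le n\sigma\lVert\nabla G(\theta_t)\rVert$ bounds the denominator; a Cauchy--Schwarz / power-mean step then converts the ratio into $c\,\lVert\nabla G(\theta_t)\rVert^2$ in expectation. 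Plugging back gives $\EEsub{G(\theta_{t+1})}{t}\le G(\theta_t)-c\,\alpha_t\lVert\nabla G(\theta_t)\rVert^2+\tfrac{L}{2}\alpha_t^2\sigma^2$. Summing over $t=0,\dots,T-1$, using that $G$ is bounded below so the left side telescopes, rearranging, and taking $\alpha_t$ constant equal to $\min\{2n/(L\sigma^2),\,c_0/\sqrt T\}$ yields $\tfrac1T\sum_t\EE{\lVert\nabla G(\theta_t)\rVert^2}\le C/\sqrt T$; since the minimum is at most the average this is the claimed bound, and $\EE{\lVert\nabla G(\theta_t)\rVert^2}\le\epsilon$ therefore needs $T=O(1/\epsilon^2)$ iterations.

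I expect the \textbf{main obstacle} to be the third step. The weights are a genuinely nonlinear function of the per-example alignments $a_i$ (rectification by $\max(\cdot,0)$, $\ell_1$ normalization, and the degenerate $\delta(\cdot)$ branch that must be handled separately when an entire minibatch is assigned zero weight, in which case $\theta$ does not move and the inequality is trivial), so pushing the minibatch expectation through and landing on a clean constant $c$ multiplying $\lVert\nabla G(\theta_t)\rVert^2$---rather than a weaker power of $\lVert\nabla G(\theta_t)\rVert$ or a constant that degrades with the batch size---requires the right chain of elementary inequalities and careful bookkeeping of the sampling probabilities, and it is the only place where the ``validation set $\subseteq$ training set'' assumption is actually consumed. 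A secondary, mechanical difficulty is tracking constants so that the admissible step size comes out as precisely $\alpha_t\le 2n/(L\sigma^2)$ and the final rate carries no spurious $\log T$ factor, which is what pins down the step-size schedule used inside the proof.
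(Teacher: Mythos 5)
Your route is the standard nonconvex-SGD template (quantitative expected descent, telescope, tune the step size), and it is genuinely different from the paper's. The paper analyzes the \emph{unnormalized} update $\theta_{t+1}=\theta_t-\tfrac{\alpha_t}{n}\sum_{i\in B}\max\{\nabla G^\top\nabla f_i,0\}\nabla f_i$ (Eq.~\ref{eq:updaterule}), for which the quadratic term of the smoothness bound is at most $\tfrac{L\alpha_t^2\sigma^2}{2n^2}\mathcal{T}_t$ with $\mathcal{T}_t=\sum_{i\in B}\max\{\nabla G^\top\nabla f_i,0\}^2$, i.e.\ it is \emph{proportional to the linear term} $\tfrac{\alpha_t}{n}\mathcal{T}_t$. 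That is where $\alpha_t\le 2n/(L\sigma^2)$ is actually spent: it gives a deterministic per-step decrease $G(\theta_{t+1})\le G(\theta_t)-\tfrac{\alpha_t}{n}\mathcal{T}_t(1-\tfrac{L\alpha_t\sigma^2}{2n})$ with a \emph{constant} step size and no additive noise floor. The paper then telescopes to bound $\sum_t\EEsub{\mathcal{T}_t}{0\sim t}$, picks the best iterate $\tau$, and only there converts $\EEsub{\mathcal{T}_\tau}{0\sim\tau}=O(1/T)$ into $\EEsub{\lVert\nabla G(\theta_\tau)\rVert^2}{0\sim\tau-1}=O(1/\sqrt{T})$ via $\lVert\nabla G\rVert^2\le\tfrac1M\sum_{i=1}^M\max\{\nabla G^\top\nabla f_i,0\}$ plus Jensen and Cauchy--Schwarz, deliberately losing a square root. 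Your approach instead works with the normalized weights (closer to what Algorithm~\ref{alg:ad} actually does) and aims for a direct $c\lVert\nabla G\rVert^2$ in the linear term; that buys a more standard-looking argument at the cost described below.

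Two concrete gaps. First, with $\sum_i w_{i,t}=1$ your quadratic term is the constant floor $\tfrac{L}{2}\alpha_t^2\sigma^2$, not something proportional to the descent, so you are forced to take $\alpha_t\asymp 1/\sqrt{T}$ to get the rate. The theorem's hypothesis is only $\alpha_t\le 2n/(L\sigma^2)$, and the paper's proof explicitly covers any constant step below that threshold; your argument does not, so as written you prove a variant of the theorem under a stronger, $T$-dependent step-size assumption (and the claim that $\alpha_t\le 2n/(L\sigma^2)$ is ``spent'' on the quadratic term is not accurate in your setting). Second, in your key step the specific chain you describe --- lower-bounding the numerator and separately upper-bounding the denominator by $n\sigma\lVert\nabla G\rVert$ --- lands on $\EE{\cdot}\gtrsim\lVert\nabla G\rVert^4/\lVert\nabla G\rVert=\lVert\nabla G\rVert^3$, which is \emph{weaker} than $c\lVert\nabla G\rVert^2$ near a critical point and would degrade the rate. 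The denominator bound should be discarded entirely: use the power-mean inequality $\sum_i\max(a_i,0)^2/\sum_j\max(a_j,0)\ge\tfrac1n\sum_i\max(a_i,0)$ pointwise, then the linear expectation $\EEsub{\sum_{i\in B}\max(a_i,0)}{t}\ge\EEsub{\sum_{i\in B\cap V}a_i}{t}=\tfrac{nM}{N}\lVert\nabla G\rVert^2$ over uniform minibatches (this is where validation~$\subseteq$~training is consumed), giving $c=M/N$. With that repair and the explicit step-size schedule, your argument goes through, but it establishes a different (and in the step-size sense weaker) statement than the one the paper proves.
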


% !TEX root = ../main.tex

\section{Experiments}

To test the effectiveness of our reweighting algorithm, we designed both class imbalance and noisy
label settings, and a combination of both, on standard MNIST and CIFAR benchmarks for image
classification using deep CNNs.
\footnote{Code released at: \url{https://github.com/uber-research/learning-to-reweight-examples}}
% !TEX root = ../main.tex

\subsection{MNIST data imbalance experiments} 

We use the standard MNIST handwritten digit classification dataset and subsample the dataset to
generate a class imbalance binary classification  task. We select a total of 5,000 images of size
28$\times$28 on class 4 and 9, where 9 dominates the training data distribution. We train a standard
LeNet on this task and we compare our method with a suite of commonly used tricks for class
imbalance: 1) \textsc{Proportion} weights each example by the inverse frequency 2) \textsc{Resample}
samples a class-balanced mini-batch for each iteration 3) \textsc{Hard Mining} selects the highest
loss examples from the majority class and 4) \textsc{Random} is a random example weight baseline
that assigns weights based on a rectified Gaussian distribution:
\begin{equation}
w_i^{\text{rnd}} = \frac{\max(z_i, 0)}{\sum_i \max(z_i, 0)}, \ \ \ \text{where} \ z_i \sim \mathcal{N}(0, 1).
\label{eq:randomwts}
\end{equation}
To make sure that our method does
not have the privilege of training on more data, we split the balanced validation set of 10 images
directly from the training set. The network is trained with SGD with a learning rate of 1e-3 and
mini-batch size of 100 for a total of 8,000 steps.

Figure~\ref{fig:mnist_imba} plots the test error rate across various imbalance ratios averaged from
10 runs with random splits. Note that our method significantly outperforms all the baselines. With
class imbalance ratio of 200:1, our method only reports a small increase of error rate around 2\%,
whereas other methods suffer terribly under this setting. Compared with resampling and hard negative
mining baselines, our approach does not throw away samples based on its class or training loss - as
long as a sample is helpful towards the validation loss, it will be included as a part of the
training loss.

% !TEX root = ../main.tex

\begin{figure}
\centering
\includegraphics[width=0.9\columnwidth]{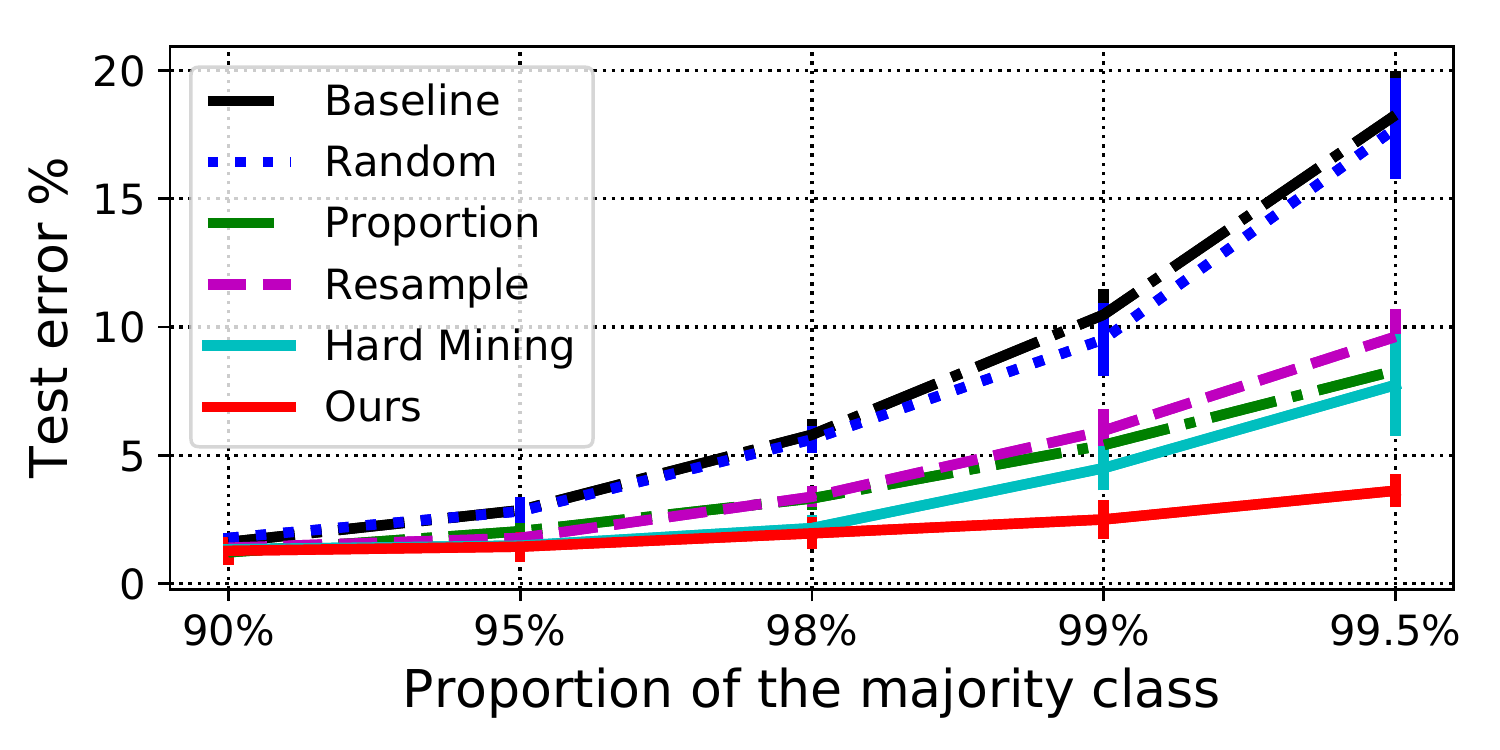}
\vspace{-0.1in}
\caption{MNIST 4-9 binary classification error using a LeNet on imbalanced classes. Our method uses a small balanced validation split of 10 examples.}
\label{fig:mnist_imba}
\end{figure}
% !TEX root = ../main.tex
\subsection{CIFAR noisy label experiments}

Reweighting algorithm can also be useful on datasets where the labels are noisy. We study two
settings of label noise here: 
\begin{itemize}
    \vspace{-0.1in}

    \item \textsc{UniformFlip}: All label classes can uniformly flip to any other label classes,
which is the most studied in the literature.

    \vspace{-0.1in}

    \item \textsc{BackgroundFlip}: All label classes can flip to a single background class. This
noise setting is very realistic. For instance, human annotators may not have recognized all the
positive instances, while the rest remain in the background class. This is also a combination of
label imbalance and label noise since the background class usually dominates the label distribution.

\end{itemize}
We compare our method with prior work on the noisy label problem.
\vspace{-0.1in}
\begin{itemize}
\item \textsc{Reed}, proposed by \citet{reed14noisy}, is a bootstrapping technique where the
training target is a convex combination of the model prediction and the label.

\item \textsc{S-Model}, proposed by \citet{goldberger17noise}, adds a fully connected softmax layer
after the regular classification output layer to model the noise transition matrix.

\item \textsc{MentorNet}, proposed by \citet{jiang17mentornet}, is an RNN-based meta-learning model
that takes in a sequence of loss values and outputs the example weights. We compare numbers reported
in their paper with a base model that achieves similar test accuracy under 0\% noise.
\end{itemize}
In addition, we propose two simple baselines: 
1) \textsc{Random}, which assigns weights according to a rectified Gaussian (see
   Eq.~\ref{eq:randomwts});
2) \textsc{Weighted}, designed for \textsc{BackgroundFlip}, where the model
knows the oracle noise ratio for each class and reweights the training loss proportional to the
percentage of clean images of that label class.
\vspace{-0.05in}
\paragraph{Clean validation set} For \textsc{UniformFlip}, we use 1,000 clean images in the
validation set; for \textsc{BackgroundFlip}, we use 10 clean images per label class. Since our
method uses information from the clean validation, for a fair comparison, we conduct an additional
finetuning on the clean data based on the pre-trained baselines. We also study the effect on the
size of the clean validation set in an ablation study.
\vspace{-0.05in}
\paragraph{Hyper-validation set} For monitoring training progress and tuning baseline
hyperparameters, we split out another 5,000 hyper-validation set from the 50,000 training images. We
also corrupt the hyper-validation set with the same noise type.
% !TEX root = ../main.tex

\begin{table}[t]
\begin{center}
\caption{CIFAR \textsc{UniformFlip} under 40\% noise ratio using a WideResNet-28-10 model. Test
accuracy shown in percentage. Top rows use only noisy data, and bottom uses additional 1000 clean
images. ``FT'' denotes fine-tuning on clean data.}
\label{tab:uniformflip}
\vskip 0.1in
\begin{small}
\begin{sc}
\begin{tabular}{ccc}
\toprule
Model              &  CIFAR-10                    & CIFAR-100                     \\
\midrule
Baseline           & 67.97 $\pm$ 0.62             & 50.66 $\pm$ 0.24              \\
Reed-Hard          & 69.66 $\pm$ 1.21             & 51.34 $\pm$  0.17             \\
S-Model            & 70.64 $\pm$ 3.09             & 49.10 $\pm$ 0.58              \\
MentorNet          & 76.6                         & 56.9                          \\
Random             & 86.06 $\pm$ 0.32.            & 58.01 $\pm$ 0.37              \\
\midrule
\multicolumn{3}{c}{Using 1,000 clean images} \\
\midrule
Clean Only         & 46.64 $\pm$ 3.90             & 9.94 $\pm$ 0.82               \\
Baseline +FT       & 78.66 $\pm$ 0.44             & 54.52 $\pm$ 0.40              \\
MentorNet +FT      & 78                           & 59                            \\
Random +FT         & 86.55 $\pm$ 0.24             & 58.54 $\pm$ 0.52              \\
Ours               & \textbf{86.92 $\pm$ 0.19}    & \textbf{61.34 $\pm$ 2.06}     \\
\bottomrule
\end{tabular}
\end{sc}
\end{small}
\end{center}
\vskip -0.1in
\end{table}
% !TEX root = ../main.tex

\begin{table}[t]
\begin{center}

\caption{CIFAR \textsc{BackgroundFlip} under 40\% noise ratio using a ResNet-32 model. Test accuracy
shown in percentage. Top rows use only noisy data, and bottom rows use additional 10 clean images
per class. ``+ES'' denotes early stopping; ``FT'' denotes fine-tuning.}

\label{tab:backgroundflip}

\resizebox{\columnwidth}{!}{
\begin{small}
\begin{sc}
\begin{tabular}{ccc}
\toprule
Model                             & CIFAR-10                    & CIFAR-100                     \\
\midrule
Baseline                          & 59.54 $\pm$ 2.16            & 37.82 $\pm$ 0.69              \\
Baseline +ES                      & 64.96 $\pm$ 1.19            & 39.08 $\pm$ 0.65              \\
Random                            & 69.51 $\pm$ 1.36            & 36.56 $\pm$ 0.44              \\
Weighted                          & 79.17 $\pm$ 1.36            & 36.56 $\pm$ 0.44              \\
Reed Soft +ES                     & 63.47 $\pm$ 1.05            & 38.44 $\pm$ 0.90              \\
Reed Hard +ES                     & 65.22 $\pm$ 1.06            & 39.03 $\pm$ 0.55              \\
S-Model                           & 58.60 $\pm$ 2.33            & 37.02 $\pm$ 0.34              \\
S-Model +Conf                     & 68.93 $\pm$ 1.09            & 46.72 $\pm$ 1.87              \\
S-Model +Conf +ES                 & 79.24 $\pm$ 0.56            & 54.50 $\pm$ 2.51              \\
\midrule
\multicolumn{3}{c}{Using 10 clean images per class} \\
\midrule
Clean Only                        & 15.90 $\pm$ 3.32            & 8.06  $\pm$ 0.76              \\
Baseline +FT                      & 82.82 $\pm$ 0.93            & 54.23 $\pm$ 1.75              \\
Baseline +ES +FT                  & 85.19 $\pm$ 0.46            & 55.22 $\pm$ 1.40              \\
Weighted +FT                      & 85.98 $\pm$ 0.47            & 53.99 $\pm$ 1.62              \\
S-Model +Conf +FT                 & 81.90 $\pm$ 0.85            & 53.11 $\pm$ 1.33              \\
S-Model +Conf +ES +FT             & 85.86 $\pm$ 0.63            & 55.75 $\pm$ 1.26              \\
Ours                              & \textbf{86.73 $\pm$ 0.48}   & \textbf{59.30 $\pm$ 0.60}     \\
\bottomrule
\end{tabular}
\end{sc}
\end{small}
}
\end{center}
\vskip -0.1in
\end{table}
\vspace{-0.05in}
\paragraph{Experimental details} For \textsc{Reed} model, we use the best $\beta$ reported in
\citet{reed14noisy} ($\beta=0.8$ for hard bootstrapping and $\beta=0.95$ for soft bootstrapping).
For the \textsc{S-Model}, we explore two versions to initialize the transition weights: 1) a
smoothed identity matrix; 2) in background flip experiments we consider initializing the transition
matrix with the confusion matrix of a pre-trained baseline model (\textsc{S-Model +Conf}). We find
baselines can easily overfit the training noise, and therefore we also study early stopped versions
of the baselines to provide a stronger comparison. In contrast, we find early stopping not necessary
for our method.

To make our results comparable with the ones reported in \textsc{MentorNet} and to save computation
time, we exchange their Wide ResNet-101-10 with a Wide ResNet-28-10 (WRN-28-10) \cite{wrn} with
dropout 0.3 as our base model in the \textsc{UniformFlip} experiments. We find that test accuracy
differences between the two base models are within 0.5\% on CIFAR datasets under 0\% noise. In the
\textsc{BackgroundFlip} experiments, we use a ResNet-32 \cite{resnet} as our base model.

We train the models with SGD with momentum, at an initial learning rate 0.1 and a momentum 0.9 with
mini-batch size 100. For ResNet-32 models, the learning rate decays $\times 0.1$ at 40K and 60K
steps, for a total of 80K steps. For WRN and early stopped versions of ResNet-32 models, the
learning rate decays at 40K and 50K steps, for a total of 60K steps. Under regular 0\% noise
settings, our base ResNet-32 gets 92.5\% and 68.1\% classification accuracy on CIFAR-10 and 100, and
the WRN-28-10 gets 95.5\% and 78.2\%. For the finetuning stage, we run extra 5K steps of training on
the limited clean data.

We report the average test accuracy for 5 different random splits of clean and noisy labels, with
95\% confidence interval in Table~\ref{tab:uniformflip} and \ref{tab:backgroundflip}. The background
classes for the 5 trials are [0, 1, 3, 5, 7] (CIFAR-10) and [7, 12, 41, 62, 85] (CIFAR-100).

% !TEX root = ../main.tex
\begin{figure}[t]
\centering
\includegraphics[width=0.45\columnwidth,trim={0cm 0 0cm 0},clip]{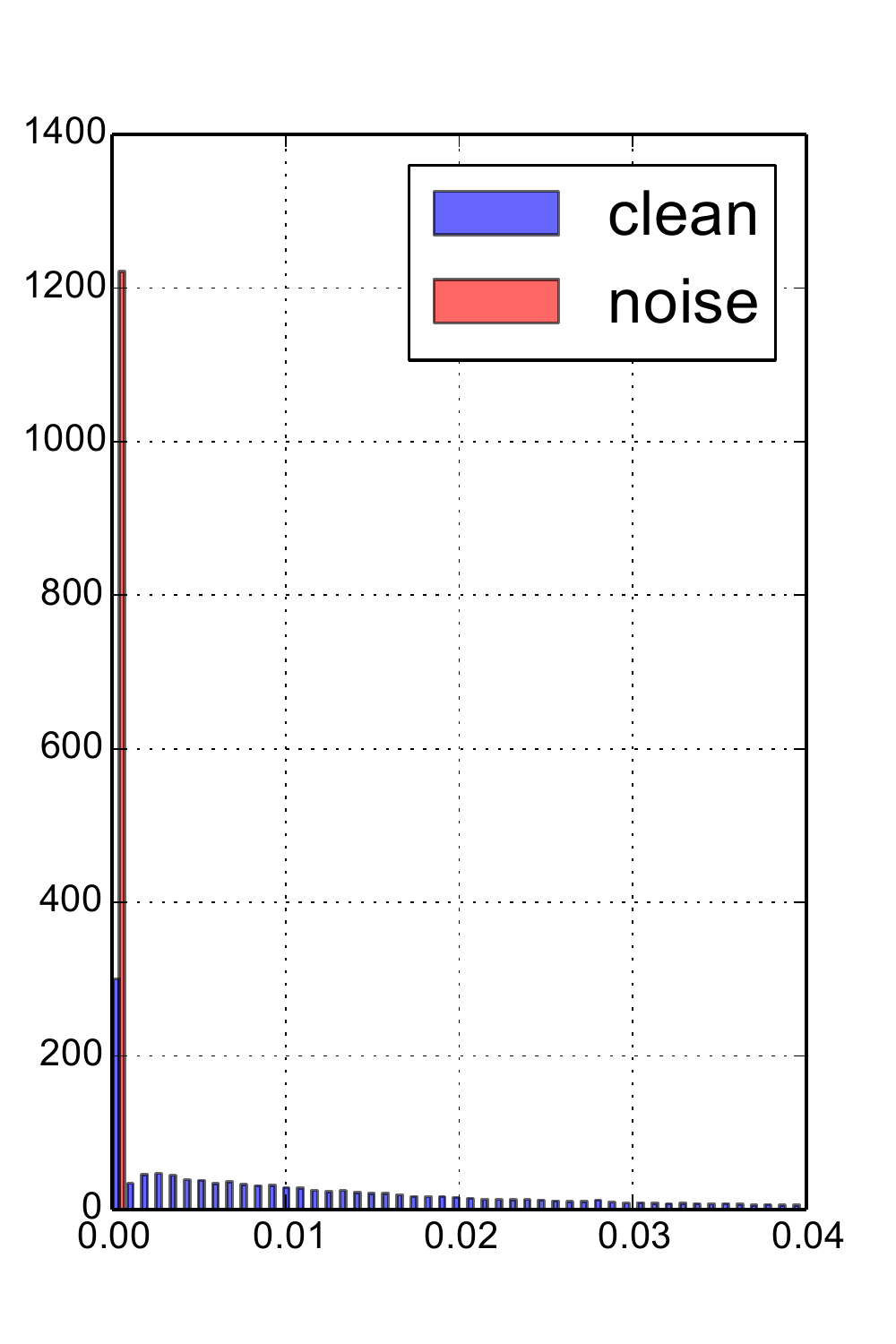}
\includegraphics[width=0.45\columnwidth,trim={0cm 0 0cm 0},clip]{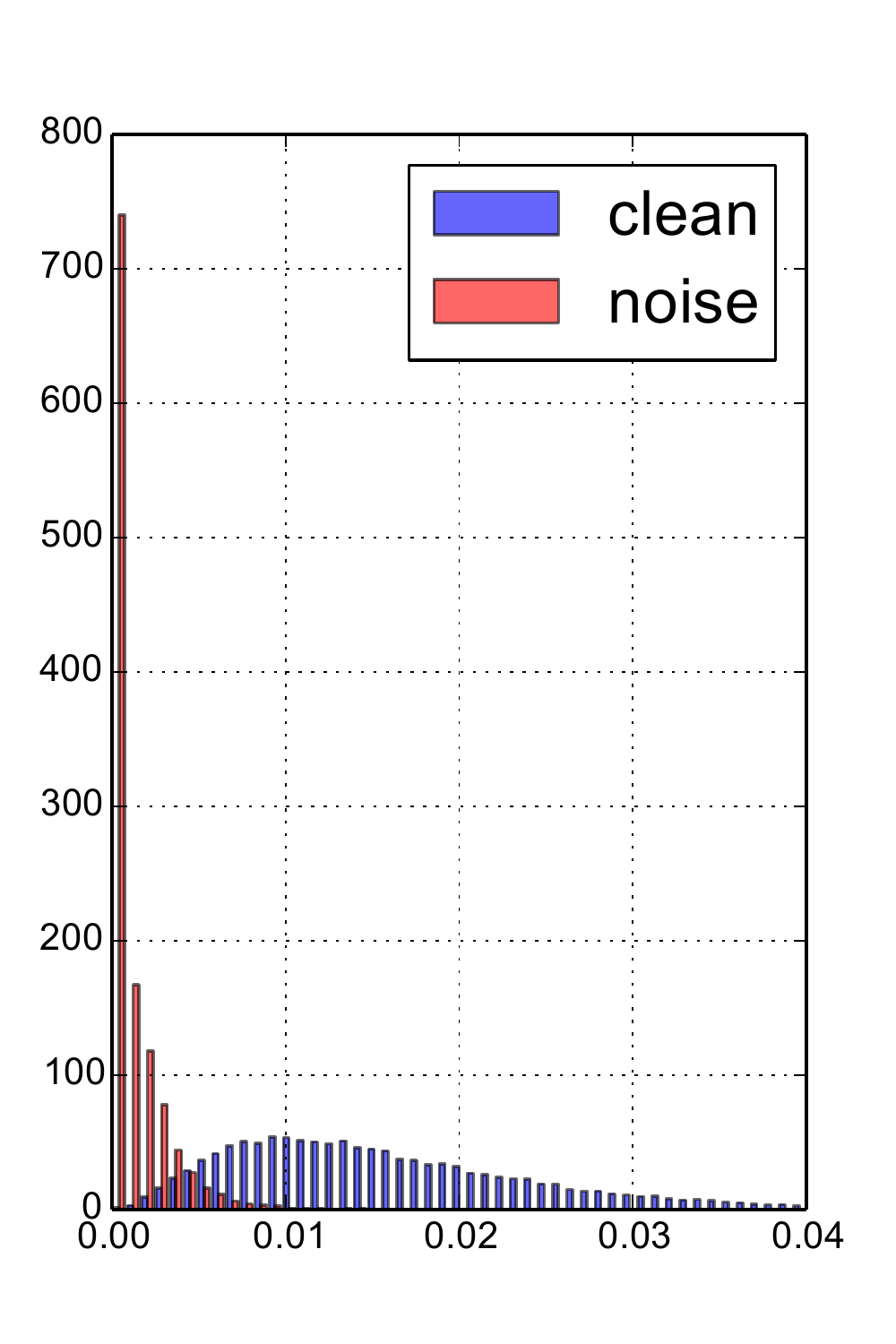}
\vspace{-0.1in}
\caption{Example weights distribution on \textsc{BackgroundFlip}. Left: a hyper-validation batch,
with randomly flipped background noises. Right: a hyper-validation batch containing only on a single
label class, with flipped background noises, averaged across all non-background classes.}
\label{fig:dist}
\end{figure}

% !TEX root = ../main.tex

\begin{figure}[t]
\centering
\includegraphics[width=0.9\columnwidth]{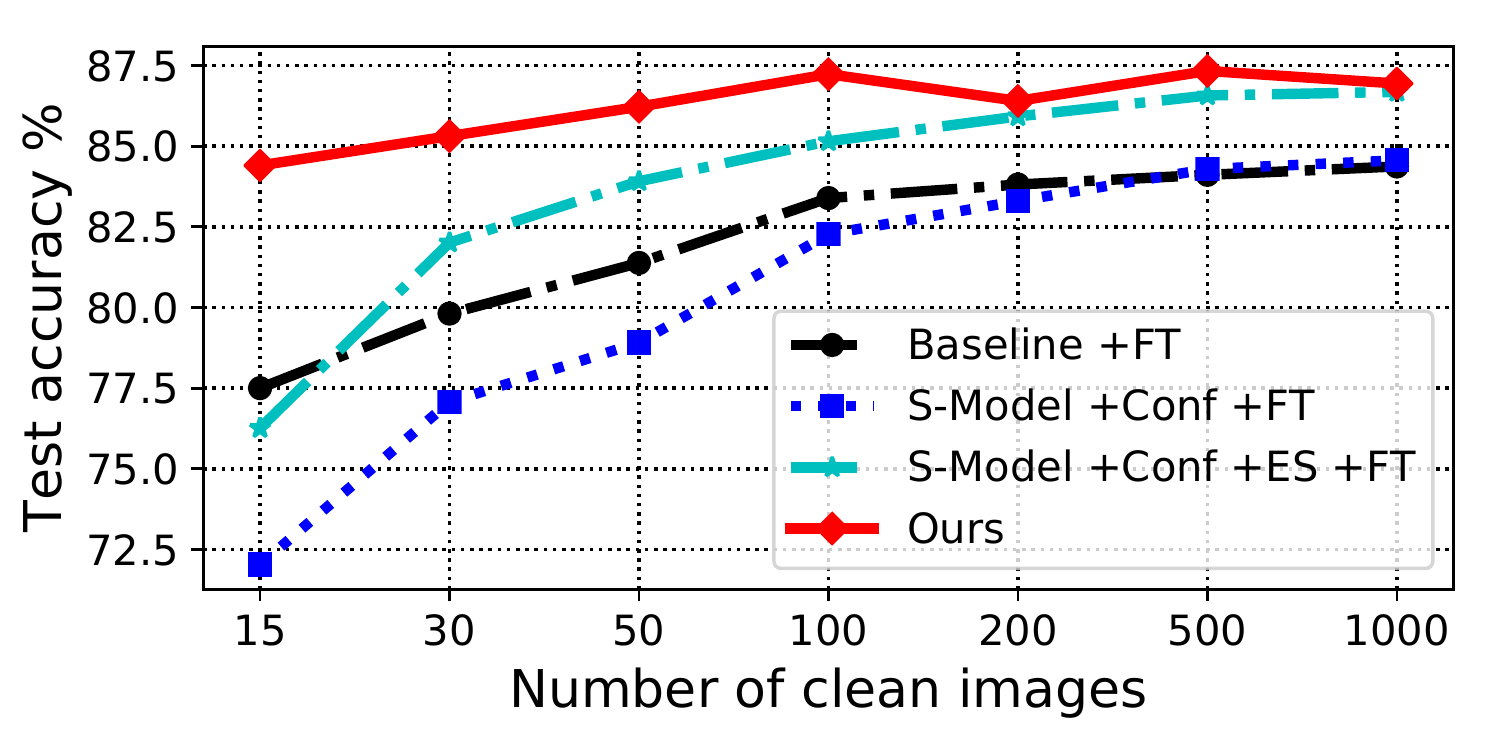}
\vspace{-0.1in}
\caption{Effect of the number of clean imaged used, on CIFAR-10 with 40\% of data flipped to label 3. ``ES'' denotes early stopping.}
\label{fig:ft}
\end{figure}
% !TEX root = ../main.tex

\begin{figure}[t]
\centering
\includegraphics[width=0.9\columnwidth]{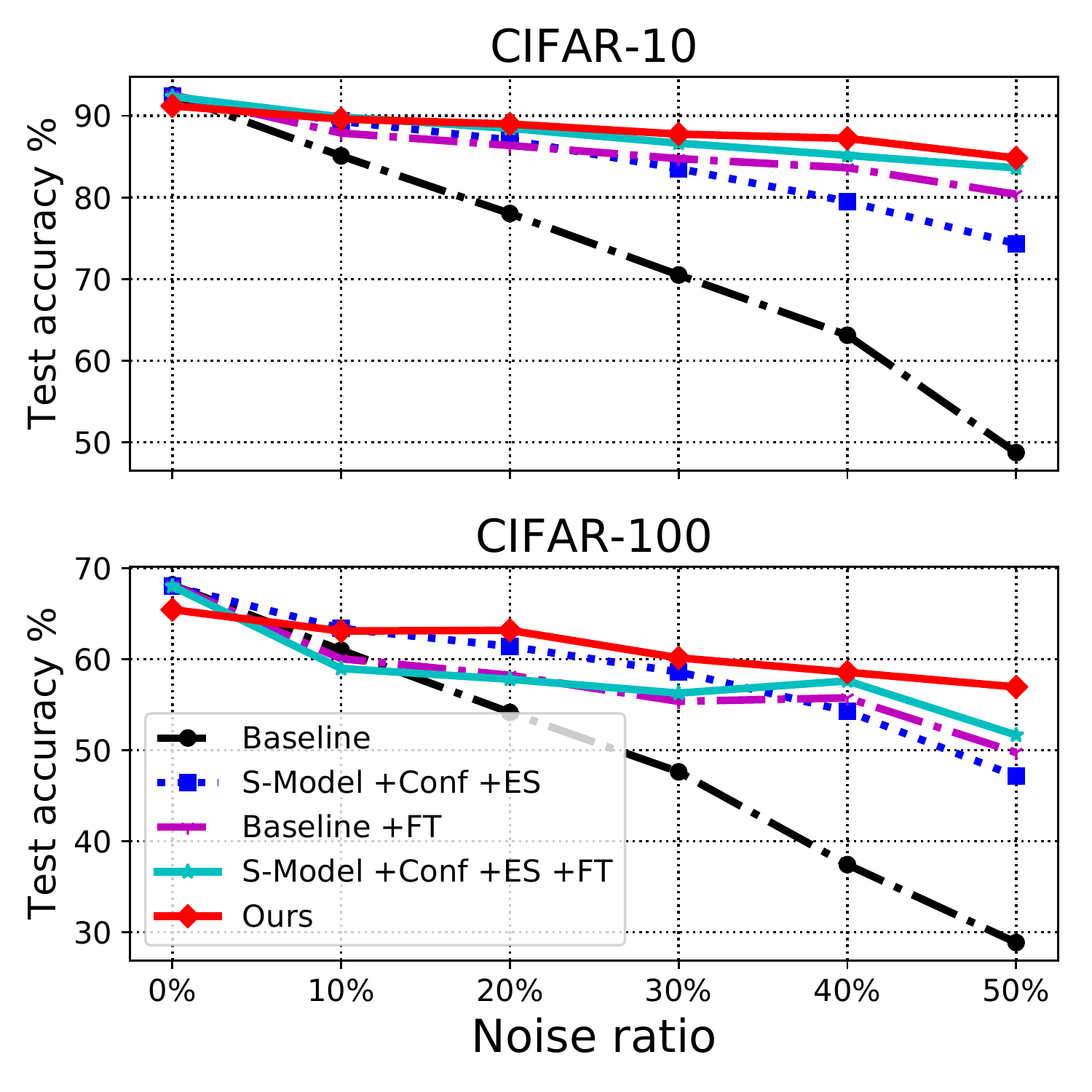}
\vspace{-0.1in}

\caption{Model test accuracy on imbalanced noisy CIFAR experiments across various noise levels using
a base ResNet-32 model. ``ES'' denotes early stopping, and ``FT'' denotes finetuning.}
\label{fig:level}

\end{figure}
% !TEX root = ../main.tex
\begin{figure}[h!]
\centering
\vspace{-0.1in}
\includegraphics[width=\columnwidth,trim={2.5cm 0 4cm 0},clip]{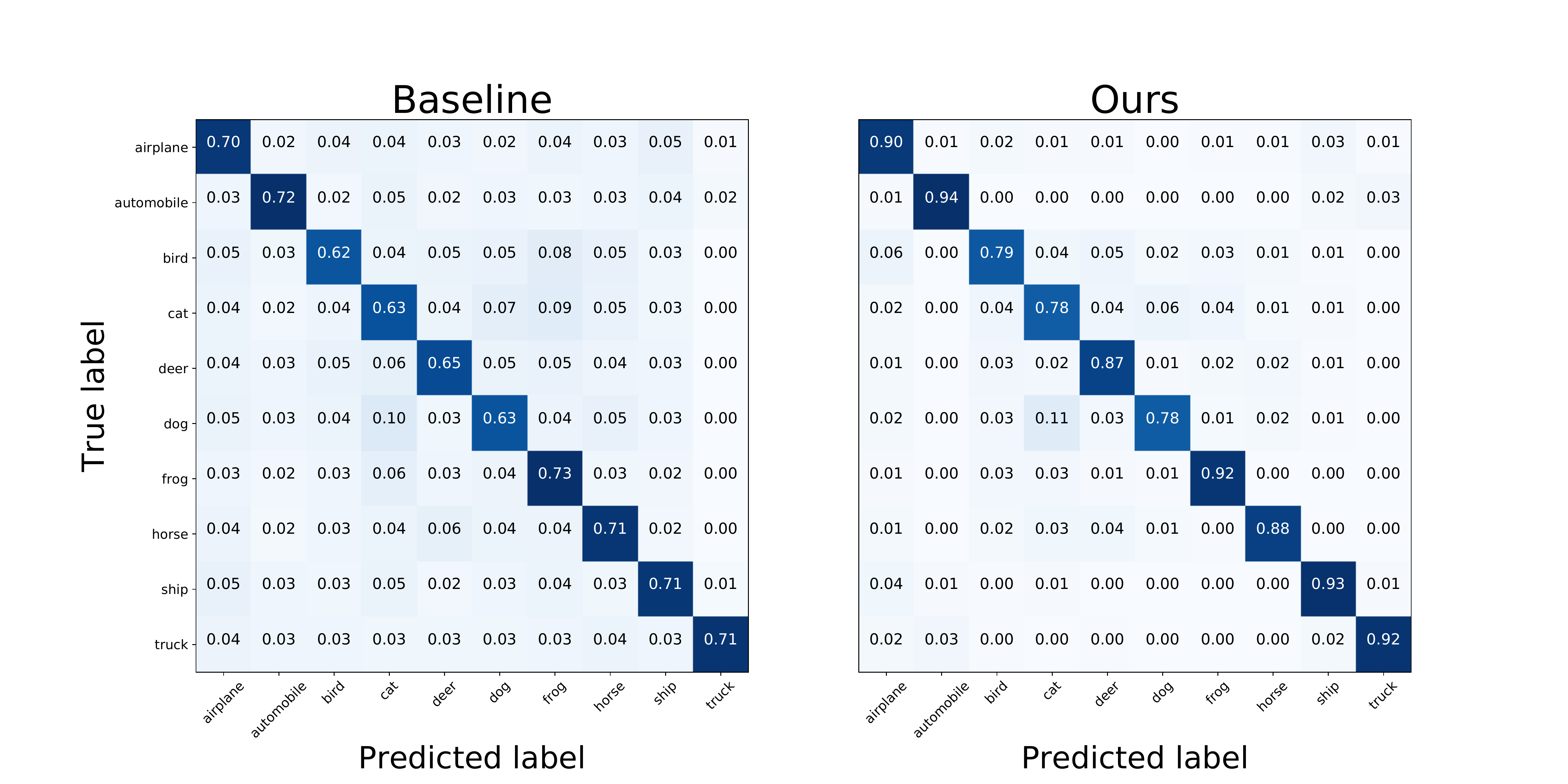}
\includegraphics[width=\columnwidth,trim={2.5cm 0 4cm 0},clip]{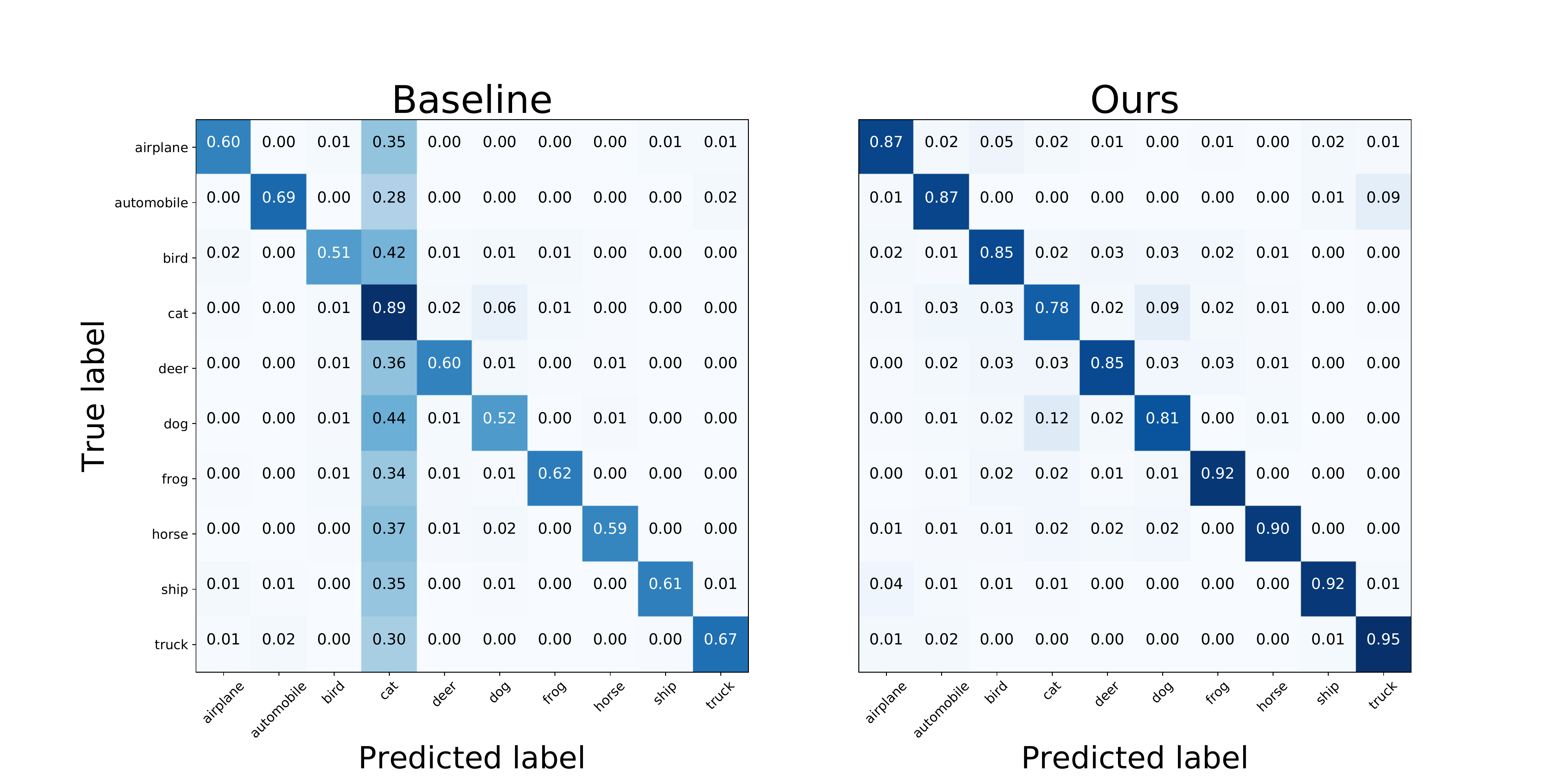}
\vspace{-0.2in}
\caption{Confusion matrices on CIFAR-10 \textsc{UniformFlip} (top) and \textsc{BackgroundFlip} (bottom)}
\label{fig:confusion}
\vspace{-0.1in}
\end{figure}
% !TEX root = ../main.tex
\begin{figure}[h]
\centering
\includegraphics[width=0.9\columnwidth,trim={0cm 0cm 0cm 0cm},clip]{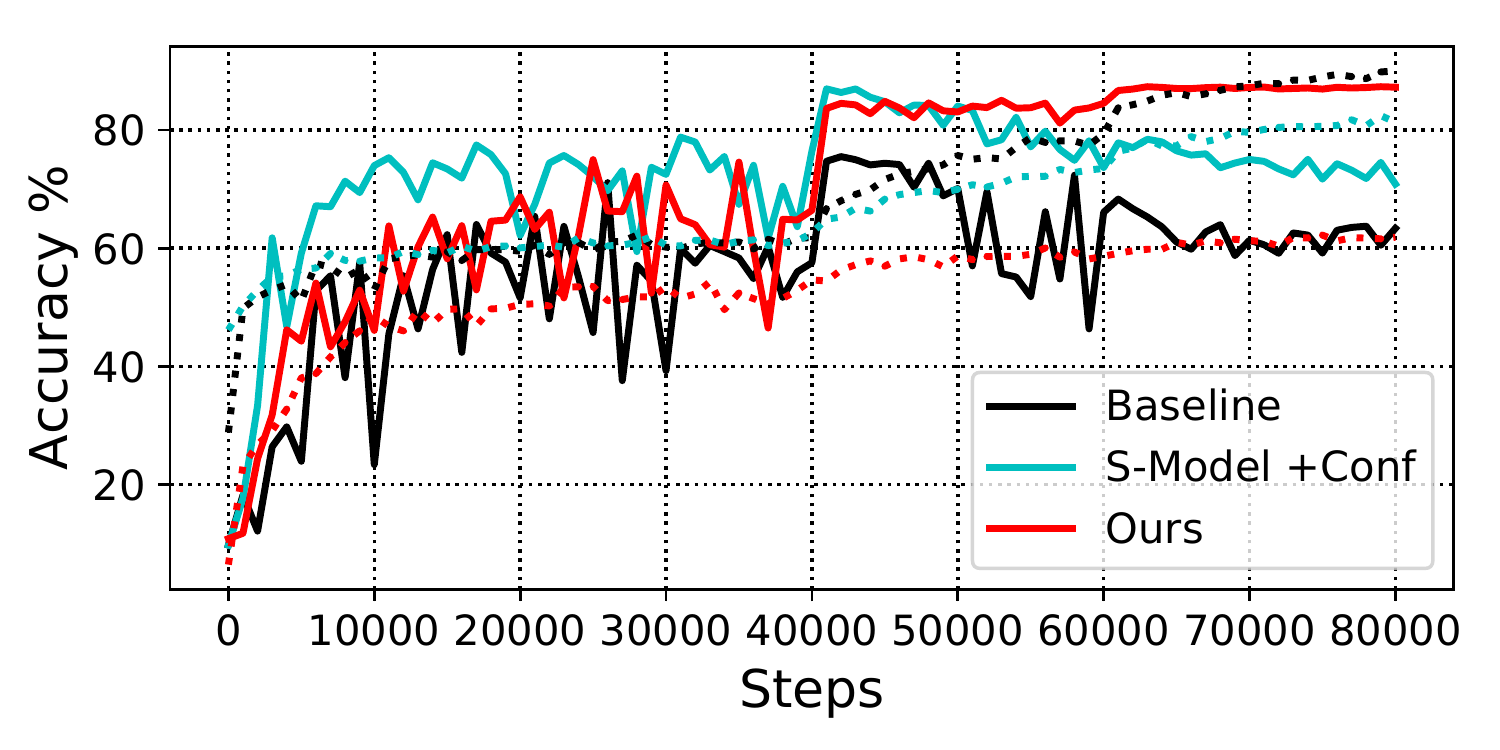}
\vspace{-0.1in}
\caption{Training curve of a ResNet-32 on CIFAR-10 \textsc{BackgroundFlip} under 40\% noise ratio.
Solid lines denote validation accuracy and dotted lines denote training. Our method is less prone to
label noise overfitting.}
\label{fig:curve}
\vspace{-0.15in}
\end{figure}

\subsection{Results and Discussion}

The first result that draws our attention is that ``Random'' performs surprisingly well on the
\textsc{UniformFlip} benchmark, outperforming all historical methods that we compared. Given that
its performance is comparable with Baseline on \textsc{BackgroundFlip} and MNIST class imbalance, we
hypothesize that random example weights act as a strong regularizer and under which the learning
objective on \textsc{UniformFlip} is still consistent.

Regardless of the strong baseline, our method ranks the top on both \textsc{UniformFlip} and
\textsc{BackgroundFlip}, showing our method is less affected by the changes in the noise type. On
CIFAR-100, our method wins more than 3\% compared to the state-of-the-art method.
\vspace{-0.05in}
\paragraph{Understanding the reweighting mechanism}
It is beneficial to understand how our reweighting algorithm contributes to learning more robust
models during training. First, we use a pre-trained model (trained at half of the total iterations
without learning rate decay) and measure the example weight distribution of a randomly sampled batch
of validation images, which the model has never seen. As shown in the left figure of Figure
\ref{fig:dist}, our model correctly pushes most noisy images to zero weights. Secondly,  we
conditioned the input mini-batch to be a single non-background class and randomly flip 40\% of the
images to the background, and we would like to see how well our model can distinguish clean and
noisy images. As shown in Figure \ref{fig:dist} right, the model is able to reliably detect images
that are flipped to the background class.
\vspace{-0.05in}
\paragraph{Robustness to overfitting noise} Throughout experimentation, we find baseline
models can easily overfit to the noise in the training set. For example, shown in
Table~\ref{tab:backgroundflip}, applying early stopping (``ES'') helps the classification
performance of ``S-Model'' by over 10\% on CIFAR-10. Figure~\ref{fig:confusion} compares the final
confusion matrices of the baseline and the proposed algorithm, where a large proportion of noise
transition probability is cleared in the final prediction. Figure~\ref{fig:curve} shows training
curves on the \textsc{BackgroundFlip} experiments. After the first learning rate decay, both
``Baseline'' and ``S-Model'' quickly degrade their validation performance due to overfitting, while
our model remains the same validation accuracy until termination. Note that here ``S-Model'' knows
the oracle noise ratio in each class, and this information is not available in our method.
\vspace{-0.05in}
\paragraph{Impact of the noise level} We would like to investigate how strongly our method can
perform on a variety of noise levels. Shown in Figure~\ref{fig:level}, our method only drops 6\%
accuracy when the noise ratio increased from 0\% to 50\%; whereas the baseline has dropped more than
40\%. At 0\% noise, our method only slightly underperforms baseline. This is reasonable since we are
optimizing on the validation set, which is strictly a subset of the full training set, and therefore
suffers from its own subsample bias.
\vspace{-0.05in}
\paragraph{Size of the clean validation set} When the size of the clean validation set grows larger,
fine-tuning on the validation set will be a reasonble approach. Here, we make an attempt to explore
the tradeoff and understand when fine-tuning becomes beneficial. Figure~\ref{fig:ft} plots the
classification performance when we varied the size of the clean validation on
\textsc{BackgroundFlip}. Surprisingly, using 15 validation images for all classes only results in a
2\% drop in performance, and the overall classification performance does not grow after having more
than 100 validation images. In comparison, we observe a significant drop in performance when only
fine-tuning on these 15 validation images for the baselines, and the performance catches up around
using 1,000 validation images (100 per class). This phenomenon suggests that in our method the clean
validation acts more like a regularizer rather than a data source for parameter fine-tuning, and
potentially our method can be complementary with fine-tuning based method when the size of the clean
set grows larger.

% !TEX root = ../main.tex

\section{Conclusion}
In this work, we propose an online meta-learning algorithm for reweighting training examples and
training more robust deep learning models. While various types of training set biases exist and
manually designed reweighting objectives have their own bias, our automatic reweighting algorithm
shows superior performance dealing with class imbalance, noisy labels, and both. Our method can be
directly applied to any deep learning architecture and is expected to train end-to-end without any
additional hyperparameter search. Validating on every training step is a novel setting and we show
that it has links with model regularization, which can be a fruitful future research direction.

\let\oldbibliography\thebibliography
\renewcommand{\thebibliography}[1]{\oldbibliography{#1}
\setlength{\itemsep}{7pt}} %Reducing spacing in the bibliography.

% \newpage
\bibliography{our_ref}
\bibliographystyle{icml2018}

\if\arxiv1
\newpage
% !TEX root = ../main.tex
\appendix

\section{Reweighting in an MLP}
\label{sec:mlp_derive}
We show the complete derivation below on calculating the example weights in an MLP network.
\begin{align}
&\frac{\partial}{\partial \epsilon_{i,t}} \EE{f^v(\theta_{t+1}(\epsilon))}
\Bigr|_{\epsilon_{i,t}=0}\\
=&\frac{1}{m}\sum_{j=1}^m \frac{\partial}{\partial \epsilon_{i,t}} f_j^v(\theta_{t+1}(\epsilon))
\Bigr|_{\epsilon_{i,t}=0}\\
=&\frac{1}{m}\sum_{j=1}^m \frac{\partial f_j^v(\theta)}{\partial \theta}\Bigr|_{\theta=\theta_t}^\top
\frac{\partial \theta_{t+1}(\epsilon_{i,t})} {\partial \epsilon_{i,t}}\Bigr|_{\epsilon_{i,t}=0}\\
\propto&-\frac{1}{m}\sum_{j=1}^m \frac{\partial f_j^v(\theta)}{\partial \theta}\Bigr|_{\theta=\theta_t}^\top
\frac{\partial f_i(\theta)}{\partial \theta}\Bigr|_{\theta=\theta_t}\\
=&-\frac{1}{m}\sum_{j=1}^m \sum_{l=1}^L
\frac{\partial f_j^v}{\partial \theta_l}\Bigr|_{\theta_l=\theta_{l,t}}^\top
\frac{\partial f_i}{\partial \theta_{l}}\Bigr|_{\theta_l=\theta_{l,t}}\\
=&-\frac{1}{m}\sum_{j=1}^m \sum_{l=1}^L
\vecc \left( \tilde{z}_{j,l-1}^v {g_{j,l}^v}^\top \right)^\top
\vecc \left( \tilde{z}_{i,l-1} g_{i,l}^\top \right)\\
=&-\frac{1}{m}\sum_{j=1}^m \sum_{l=1}^L \sum_{p=1}^{D_1} \sum_{q=1}^{D_2}
\tilde{z}_{j,l-1,p}^v g_{j,l,q}^v \tilde{z}_{i,l-1,p} g_{i,l,q}\\
=&-\frac{1}{m}\sum_{j=1}^m \sum_{l=1}^L
\sum_{p=1}^{D_1} \tilde{z}_{j,l-1,p}^v \tilde{z}_{i,l-1,p}
\sum_{q=1}^{D_2} g_{j,l,q}^v g_{i,l,q}\\
=&-\frac{1}{m}\sum_{j=1}^m \sum_{l=1}^L
(\tilde{z}^v_{j,l-1}{}^\top
\tilde{z}_{i,l-1})
(g^v_{j,l}{}^\top g_{i,l}).
\end{align}

\section{Convergence of our method}
\label{sec:lemproof}
This section provides the proof for Lemma~\ref{lem:convergence}.

\begin{lem*}

Suppose the validation loss function is Lipschitz-smooth with constant $L$, and the train loss
function $f_i$ of training data $x_i$ have $\sigma$-bounded gradients. Let the learning rate
$\alpha_t$ satisfies $\alpha_t \leq \frac{2n}{L\sigma^2}$, where $n$ is the training batch size.
Then, following our algorithm, the validation loss always monotonically decreases for any sequence of
training batches, namely,
\begin{align}
\label{eq:converge2}
G(\theta_{t+1}) \leq G(\theta_{t}),
\end{align}
where $G(\theta)$ is the total validation loss
\begin{align}
G(\theta) = \frac{1}{M} \sum_{i=1}^M f^v_i(\theta_{t+1}(\epsilon)).
\end{align}
Furthermore, in expectation, the equality in Eq. \ref{eq:converge2} holds only when the gradient of
validation loss becomes 0 at some time step $t$, namely $\EEsub{G(\theta_{t+1})}{t} = G(\theta_t)$
if and only if $\nabla G(\theta_t) = 0$, where the expectation is taking over possible training
batches at time step $t$.

\end{lem*}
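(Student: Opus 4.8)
The plan is to apply the classical one-step descent estimate for the $L$-smooth validation loss $G$ to a single reweighted SGD step, exploiting that the meta-gradient construction forces that step to be a descent direction for $G$. The first move is to put the reweighted update in a convenient normal form. Taking the validation mini-batch size equal to $M$, so that $\tfrac1M\sum_j f_j^v = G$, the chain-rule computation behind Eq.~\ref{eq:meta-gradient}--\ref{eq:sim}, evaluated at $\epsilon = 0$, gives $\tfrac{\partial}{\partial\epsilon_{i,t}}\tfrac1M\sum_j f_j^v(\theta_{t+1}(\epsilon)) = -\alpha_t\,\nabla G(\theta_t)^\top\nabla f_i(\theta_t)$; folding the fixed meta learning rate and the weight normalization into a single effective learning rate $\alpha_t$, the update over a training mini-batch $B_t$ of size $n$ becomes $\theta_{t+1} = \theta_t - \tfrac{\alpha_t}{n}\sum_{i\in B_t} w_{i,t}\nabla f_i(\theta_t)$ with $w_{i,t} = \max(a_i,0)$ and $a_i := \nabla G(\theta_t)^\top\nabla f_i(\theta_t)$. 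The key consequence is the pointwise identity $w_{i,t}\,a_i = \max(a_i,0)^2 \ge 0$.

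Next I would invoke Definition~\ref{deff:lipandbound} for $G$ along $\theta_t\to\theta_{t+1}$ and substitute the normal form:
\begin{align*}
G(\theta_{t+1}) \le G(\theta_t) - \frac{\alpha_t}{n}\sum_{i\in B_t}\max(a_i,0)^2 + \frac{L\alpha_t^2}{2n^2}\Bigl\lVert \sum_{i\in B_t}\max(a_i,0)\,\nabla f_i(\theta_t)\Bigr\rVert^2 .
\end{align*}
Cauchy--Schwarz and the $\sigma$-bounded-gradient hypothesis give $\bigl\lVert\sum_{i\in B_t}\max(a_i,0)\nabla f_i\bigr\rVert^2 \le n\sigma^2\sum_{i\in B_t}\max(a_i,0)^2$, so
\begin{align*}
G(\theta_{t+1}) \le G(\theta_t) - \frac{\alpha_t}{n}\Bigl(1 - \frac{L\alpha_t\sigma^2}{2n}\Bigr)\sum_{i\in B_t}\max(a_i,0)^2 ,
\end{align*}
and the bracket is non-negative exactly when $\alpha_t \le \tfrac{2n}{L\sigma^2}$, which is Eq.~\ref{eq:converge1}.

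For the equality clause: if $\nabla G(\theta_t)=0$, then every $a_i=0$, so every $w_{i,t}=0$, so $\theta_{t+1}=\theta_t$ and the inequality is an equality, deterministically and hence in expectation. Conversely, taking the expectation over $B_t$ and assuming $\alpha_t<\tfrac{2n}{L\sigma^2}$ (so the bracket is strictly positive), $\EEsub{G(\theta_{t+1})}{t}=G(\theta_t)$ forces $\max(a_i,0)=0$ for \emph{every} training index $i\le N$ --- each appears in some mini-batch drawn with positive probability --- i.e.\ $\nabla G(\theta_t)^\top\nabla f_i(\theta_t)\le 0$ for all $i$. Now I use the standing assumption that the validation set is contained in the training set: then $\nabla G(\theta_t)^\top\nabla f_k^v(\theta_t)\le 0$ for $k=1,\dots,M$, and averaging over $k$ gives $\lVert\nabla G(\theta_t)\rVert^2 = \nabla G(\theta_t)^\top\bigl(\tfrac1M\sum_k\nabla f_k^v(\theta_t)\bigr)\le 0$, whence $\nabla G(\theta_t)=0$. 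The boundary case $\alpha_t=\tfrac{2n}{L\sigma^2}$ needs slightly more care, e.g.\ retaining a strictly positive fraction of the quadratic term.

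The step I expect to be the main obstacle is the first one: pinning down the exact scaling of the realized reweighted update so that the $\sigma^2$ bound and the batch-size factor $n$ produced by Cauchy--Schwarz combine into precisely the stated threshold $\tfrac{2n}{L\sigma^2}$. This means being explicit about the weight normalization and the meta learning rate $\eta$ carried by Eq.~\ref{eq:meta-gradient}, and (strictly speaking) that the $\max(\cdot,0)$ clip does not interfere with the differentiation used to obtain it. Once the update is in the normal form above, the remainder is the textbook $L$-smooth descent inequality together with the sign identity $w_{i,t}\,\nabla G(\theta_t)^\top\nabla f_i(\theta_t)\ge 0$ and, for the converse direction, the validation-inside-training observation.
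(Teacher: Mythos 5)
Your proposal follows the paper's proof essentially line for line: the same normal form $\theta_{t+1}=\theta_t-\tfrac{\alpha_t}{n}\sum_{i\in B}\max\{\nabla G^\top\nabla f_i,0\}\nabla f_i$ for the realized update, the same $L$-smooth descent inequality with the sign identity $w_i\,\nabla G^\top\nabla f_i=\max\{\nabla G^\top\nabla f_i,0\}^2$, and the same argument for the equality clause (the paper runs it contrapositively, exhibiting one validation example with $\nabla G^\top\nabla f_j>0$ that is sampled with positive probability; you run it forward, concluding $\nabla G^\top\nabla f_i\le 0$ for all $i$ and averaging over the validation subset --- these are equivalent). Your observation that the boundary case $\alpha_t=\tfrac{2n}{L\sigma^2}$ kills the quadratic term and so needs separate treatment for the ``only if'' direction is a fair point that the paper also glosses over.

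The one substantive issue is the constant, and you correctly flagged it as the obstacle but then wrote a final display that does not follow from your own intermediate step. Your Cauchy--Schwarz bound $\bigl\lVert\sum_{i\in B}\max(a_i,0)\nabla f_i\bigr\rVert^2\le n\sigma^2\sum_{i\in B}\max(a_i,0)^2$ gives $I_2\le\tfrac{L\alpha_t^2\sigma^2}{2n}\sum_i\max(a_i,0)^2$, hence a bracket of $1-\tfrac{L\alpha_t\sigma^2}{2}$ and the threshold $\alpha_t\le\tfrac{2}{L\sigma^2}$ --- not the bracket $1-\tfrac{L\alpha_t\sigma^2}{2n}$ and threshold $\tfrac{2n}{L\sigma^2}$ that you (and the lemma) state. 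The paper obtains the stated threshold by invoking $\lVert\sum_i v_i\rVert^2\le\sum_i\lVert v_i\rVert^2$ and calling it the triangle inequality, which is false for general vectors (take $v_1=v_2$); so the factor-of-$n$ discrepancy you sensed is real and lies in the paper's Eq.~\ref{eq:firstineq}, not in your bookkeeping. The monotone-decrease and convergence conclusions survive with the smaller admissible step size $\alpha_t\le\tfrac{2}{L\sigma^2}$; if you want the lemma exactly as stated, you must either justify the paper's norm inequality under an additional assumption or accept the corrected constant.
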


\begin{proof}
Suppose we have a small validation set with $M$ clean data $\{x_1, x_2, \cdots, x_M\}$, each
associating with a validation loss function $f_i(\theta)$, where $\theta$ is the parameter of the
model. The overall validation loss would be,
\begin{align}
  G(\theta) = \frac{1}{M} \sum_{i=1}^M f_i(\theta).
\end{align}
Now, suppose we have another $N-M$ training data, $\{x_{M+1}, x_{M+2}, \cdots, x_N\}$, and we add
those validation data into this set to form our large training dataset $T$, which has $N$ data in
total. The overall training loss would be,
\begin{align}
  F(\theta) = \frac{1}{M} \sum_{i=1}^N f_i(\theta).
\end{align}

For simplicity, since $M \ll N$, we assume that the validation data is a subset of the training
data. During training, we take a mini-batch $B$ of training data at each step, and $|B| = n$.
Following some similar derivation as Appendix \ref{sec:mlp_derive}, we have the following update
rules:
\begin{align}
\theta_{t+1} = \theta_t - \frac{\alpha_t}{n} \sum_{i \in B} \max \left\{\nabla G^\top \nabla f_i, 0
\right\} \nabla f_i,
\label{eq:updaterule}
\end{align}
where $\alpha_t$ is the learning rate at time-step $t$. Since all gradients are taken at $\theta_t$,
we omit $\theta_t$ in our notations.

Since the validation loss $G(\theta)$ is Lipschitz-smooth, we have
\begin{align}
G(\theta_{t+1}) \leq G(\theta_t) + \nabla G^\top\Delta \theta + \frac{L}{2} \lVert \Delta \theta \rVert^2.
\end{align}

Plugging our updating rule (Eq. \ref{eq:updaterule}),
\begin{align}
G(\theta_{t+1}) \leq G(\theta_t) - I_1 + I_2,
\end{align}
where,
\begin{align}
\begin{split}
I_1 &=  \frac{\alpha_t}{n} \sum_{i \in B} \max \{\nabla G^\top \nabla f_i, 0 \}\nabla G^\top \nabla f_i\\
&=  \frac{\alpha_t}{n} \sum_{i \in B} \max \{\nabla G^\top \nabla f_i, 0 \}^2,\\
\end{split}
\end{align}
and,
\begin{align}
I_2 &= \frac{L}{2} \left \lVert \frac{\alpha_t}{n} \sum_{i \in B} \max \{\nabla G^\top \nabla f_i, 0 \}\nabla f_i \right \rVert^2\\
    &\le  \frac{L}{2} \frac{\alpha^2_t}{n^2} \sum_{i \in B} \left \lVert
          \max \left \{\nabla G^\top \nabla f_i, 0 \right \}\nabla f_i \right \rVert^2\label{eq:firstineq}\\
    &=    \frac{L}{2} \frac{\alpha^2_t}{n^2} \sum_{i \in B}
          \max \left \{\nabla G^\top \nabla f_i, 0 \right \}^2
          \left \lVert \nabla f_i \right \rVert^2\\
    &\le  \frac{L}{2} \frac{\alpha^2_t}{n^2} \sum_{i \in B}
          \max \left \{\nabla G^\top \nabla f_i, 0 \right \}^2 \sigma^2\label{eq:secondieq}.
\end{align}

The first inequality (Eq.~\ref{eq:firstineq}) comes from the triangle inequality. The second
inequality (Eq.~\ref{eq:secondieq}) holds since $f_i$ has $\sigma$-bounded gradients. If we denote
$\mathcal{T}_t = \sum_{i \in B} \max
\{\nabla G^\top \nabla f_i, 0 \}^2$, where $t$ stands for the time-step $t$, then
\begin{align}
\label{eq:decreasingvalid}
G(\theta_{t+1}) \leq G(\theta_t) - \frac{\alpha_t}{n} \mathcal{T}_t
                     \left(1 - \frac{L \alpha_t \sigma^2}{2 n } \right).
\end{align}

Note that by definition, $\mathcal{T}_t$ is non-negative, and since $\alpha_t \le
\frac{2n}{L\sigma^2}$, if follows that that $G(\theta_{t+1}) \le G(\theta_t)$ for any $t$.

Next, we prove $\EEsub{\mathcal{T}_t}{t}= 0$ if and only if $\nabla G = 0$, and
$\EEsub{\mathcal{T}_t}{t} > 0$ if and only if $\nabla G \neq 0$, where the expectation is taken over
all possible training batches at time step $t$. %For simplicity, we assume the training batch size $n$
%is equal to the validation size $M$. For other cases, the proof will be similiar.
%Given this assumption, there is a non-zero probability
%$p$ that the sampled training batch is happened to be the validation set $\{x_1, x_2, \cdots,
%x_M\}$.
It is obvious that when $\nabla G = 0$, $\EEsub{\mathcal{T}_t}{t} = 0$. If $\nabla G \neq
0$, from the inequality below, we firstly know that there must exist a validation example $x_{j, 0 \leq j \leq M}$ such
that $\nabla G^\top \nabla f_j > 0$,
\begin{align}
\label{eq:positivedotprodexist}
0 < \lVert \nabla G \rVert^2 = \nabla G^\top \nabla G = \frac{1}{M} \sum_{i=1}^M \nabla G^\top \nabla f_i.
\end{align}
Secondly, there is a non-zero possibility $p$ to sample a training batch $B$ such that it contains this data $x_j$.
Also noticing that $\mathcal{T}_t$ is a non-negative random variable, we have,
\begin{align}
\label{eq:EETlargerzero}
\begin{split}
\EEsub{\mathcal{T}_t}{t} &\geq p \sum_{i \in B} \max\{\nabla G^\top \nabla f_i, 0\}^2\\
&\geq p \max\{\nabla G^\top \nabla f_j, 0\}^2 \\
&= p \left( \nabla G^\top \nabla f_j \right)^2 > 0.
\end{split}
\end{align}

Therefore, if we take expectation over the training batch on both sides of Eq. \ref{eq:decreasingvalid},
we can conclude that,
\begin{align}
\EEsub{G(\theta_{t+1})}{t} \leq G(\theta_{t}),
\end{align}
where the equality holds if and only if $\nabla G = 0$. This finishes our proof for Lemma 1.
\end{proof}

\section{Convergence rate of our method}
\label{sec:thmproof}
This section provides proof for Theorem 2.

\begin{thm*}

\end{thm*}

\begin{proof}
From the proof of Lemma~\ref{lem:convergence}, we have
\begin{align}
\begin{split}
&\frac{\alpha_t}{n} \left(1 - \frac{L \alpha_t \sigma^2}{2n} \right) \EEsub{\mathcal{T}_t}{0 \sim t} \\
\leq& \EEsub{G(\theta_t)}{0 \sim t-1} - \EEsub{G(\theta_{t+1})}{0 \sim t}.
\end{split}
\end{align}

If we let $\alpha_t$ to be a constant $\alpha < \frac{2n}{L\sigma^2}$ (or a decay positive sequence
upper bounded by $\alpha$), and let $\kappa = \left(1 - \frac{L \alpha \sigma^2}{2n} \right)\alpha/n
> 0$, then we have,
\begin{align}
\begin{split}
 \kappa \sum_{t=0}^T \EEsub{\mathcal{T}_t}{0 \sim t} &\leq \EEsub{G(\theta_0)}{0} - \EEsub{G(\theta_{T+1})}{0 \sim T}\\
  &\leq G(\theta_0) - G(\theta^*),
\end{split}
\end{align}

where $G(\theta^*)$ is the global minimum of function $G$. Therefore, it is obvious to see that
there exist a time-step $ 0 \leq \tau \leq T$ such that,
\begin{align}
\EEsub{\mathcal{T}_{\tau}}{0 \sim \tau} \leq \frac{G(\theta_0) - G(\theta^*)}{\kappa T}.
\end{align}

We next prove that for this time-step $\tau$, the gradient square $\EEsub{\lVert \nabla G(\theta_{\tau})
\rVert^2}{0 \sim \tau-1}$ is smaller than $O(1/\sqrt{T})$. %We know that there is a positive
%probability $p$ that the sampled training batch is happened to be the validation set $\{x_1, x_2,
%\cdots, x_M\}$, and this probability $p$ is only determined by constants $M$, $N$, and $n$.
Considering such $M$ training batches $B_1, B_2, \cdots, B_M$ such that $B_i$ is guaranteed to contain $x_i$. We know that those batches have non-zero sampling probability, denoted as $p_1, p_2, \cdots, p_M$. We also denote $p = \min\{p_1, p_2, \cdots, p_M\}$.
Now, we have,
\begin{align}
M\EEsub{\mathcal{T}_{\tau}}{0 \sim \tau}
&= \EEsub{M\EEsub{\mathcal{T}_{\tau}}{\tau}}{0 \sim \tau-1}\\
&\geq \EEsub{\sum_{k = 1}^M p_k \sum_{i \in B_k} \max\{\nabla G^\top \nabla f_i, 0\}^2}{0 \sim \tau-1} \label{ineq:first} \\
&\geq p\EEsub{\sum_{k = 1}^M \sum_{i \in B_k} \max\{\nabla G^\top \nabla f_i, 0\}^2}{0 \sim \tau-1} \\
&\geq p\EEsub{\sum_{i=1}^M \max\{\nabla G^\top \nabla f_i, 0\}^2}{0 \sim \tau-1} \\
&= p \sum_{i=1}^M \EEsub{\max\{\nabla G^\top \nabla f_i, 0\}^2}{0 \sim \tau-1}       \\
&\geq p \sum_{i=1}^M \left (\EEsub{\max\{\nabla G^\top \nabla f_i, 0\}}{0 \sim \tau-1} \right)^2 \label{ineq:second}     \\
&\geq \frac{p}{M} \left ( \sum_{i=1}^M \EEsub{\max\{\nabla G^\top \nabla f_i, 0\}}{0 \sim \tau-1} \right)^2\label{ineq:third}.
\end{align}
The inequality in Eq.~\ref{ineq:first} comes from the non-negativeness of $\mathcal{T}_t$, the
inequality in Eq.~\ref{ineq:second} comes from the property of expectation, and the final inequality
in Eq.~\ref{ineq:third} comes from the Cauchy-Schwartz inequality. Therefore,
\begin{align}
\begin{split}
&\sum_{i=1}^M \EEsub{\max\{\nabla G^\top \nabla f_i, 0\}}{0 \sim \tau-1} \\
\leq& M\sqrt{\frac{(G(\theta_0) - G(\theta^*))}{p \kappa}} \sqrt{\frac{1}{T}},
\end{split}
\end{align}
and so,
\begin{align}
\label{eq:finalequa}
\begin{split}
&\EEsub{\lVert \nabla G(\theta_{\tau}) \rVert^2}{0 \sim \tau-1}\\
=& \EEsub{\nabla G^\top \nabla G}{0 \sim \tau-1}\\
=& \EEsub{\nabla G^\top \left( \frac{\sum_{i=0}^M \nabla f_i}{M} \right)}{0 \sim \tau-1}\\
\leq&  \frac{1}{M} \sum_{i=1}^M \EEsub{\max\{\nabla G^\top \nabla f_i, 0\}}{0 \sim \tau-1}\\
\leq&  \sqrt{\frac{G(\theta_0) - G(\theta^*)}{p \kappa}} \sqrt{\frac{1}{T}}.
\end{split}
\end{align}
Therefore, we can conclude that conclude that our algorithm can always achieve $\min \limits_{0 < t < T} \EE{\lVert
\nabla G(\theta_t)\rVert^2} \leq O(\sqrt{1/T})$ in $T$ steps, and this finishes our proof of
Theorem~\ref{thm:convergencerate}.
\end{proof}

\fi

\end{document}